\documentclass[sigconf,nonacm]{acmart}
\AtBeginDocument{%
  }

\acmSubmissionID{rfp4136}

\usepackage{algorithm}
\usepackage{algpseudocode}
\usepackage{amsmath}
\usepackage{amsthm}
\usepackage{balance}
\newtheorem{assumption}{Assumption}

\begin{document}

\title{Rethinking Graph Generalization through the Lens of Sharpness-Aware Minimization}

\author{Yang Qiu}
\orcid{0000-0002-3564-0521}
\email{anders@hust.edu.cn}
\affiliation{%
  \institution{School of Computer Science and Technology, Huazhong University of Science and Technology}
  \city{Wuhan}
  \state{Hubei}
  \country{China}
}

\author{Yixiong Zou}
\orcid{0000-0002-2125-9041}
\authornotemark[0]
\authornote{Corresponding author.}
\affiliation{%
  \institution{School of Computer Science and Technology, Huazhong University of Science and Technology}
  \city{Wuhan}
  \state{Hubei}
  \country{China}
}
\email{yixiongz@hust.edu.cn}

\author{Jun Wang}
\orcid{0000-0002-9515-076X}
\affiliation{%
  \institution{iWudao Tech}
  \city{Nanjing}
  \state{Jiangsu}
  \country{China}
}
\email{jwang@iwudao.tech}


\begin{abstract}
Graph Neural Networks (GNNs) have achieved remarkable success across various graph-based tasks but remain highly sensitive to distribution shifts. In this work, we focus on a prevalent yet underexplored phenomenon in graph generalization, Minimal Shift Flip (MSF)—where test samples that slightly deviate from the training distribution are abruptly misclassified. To interpret this phenomenon, we revisit MSF through the lens of Sharpness-Aware Minimization (SAM), which characterizes the local stability and sharpness of the loss landscape while providing a theoretical foundation for modeling generalization error. To quantify loss sharpness, we introduce the concept of Local Robust Radius, measuring the smallest perturbation required to flip a prediction and establishing a theoretical link between local stability and generalization. Building on this perspective, we further observe a continual decrease in the robust radius during training, indicating weakened local stability and an increasingly sharp loss landscape that gives rise to MSF. To jointly solve the MSF phenomenon and the intractability of radius, we develop an energy-based formulation that is theoretically proven to be monotonically correlated with the robust radius, offering a tractable and principled objective for modeling flatness and stability. Building on these insights, we propose an energy-driven generative augmentation framework (E2A) that leverages energy-guided latent perturbations to generate pseudo-OOD samples and enhance model generalization. Extensive experiments across multiple benchmarks demonstrate that E2A consistently improves graph OOD generalization, outperforming state-of-the-art baselines. Code is available at \url{https://github.com/anders1123/E2A}
\end{abstract}

\begin{CCSXML}
<ccs2012>
   <concept>
       <concept_id>10010147.10010257</concept_id>
       <concept_desc>Computing methodologies~Machine learning</concept_desc>
       <concept_significance>300</concept_significance>
       </concept>
 </ccs2012>
\end{CCSXML}

\ccsdesc[300]{Computing methodologies~Machine learning}

\keywords{Out-of-distribution generalization; Graph neural networks; Perturbation training}


\maketitle

\section{Introduction}
Graph Neural Networks (GNNs) have demonstrated exceptional performance in various web-related graph tasks~\cite{kipfSemiSupervisedClassificationGraph2017, velickovicGraphAttentionNetworks2018, xuHowPowerfulAre2018}. However, they typically assume that training and testing data are independent and identically distributed (the i.i.d assumption), a condition that often fails in real-world applications~\cite{huOpenGraphBenchmark2020a, guiGOODGraphOutofDistribution2022}. Distributional shift between training and test sets can lead to significant degradation in model performance. This has motivated growing research on out-of-distribution (OOD) generalization in GNNs, aiming to enhance model robustness against distribution shifts.

In this study, we identify a widely observed but underexplored phenomenon as illustrated in Figure \ref{fig:sam} (Left), some test samples that exhibit only minimal distributional shifts and remain highly similar to training samples in feature space are nonetheless classified into different categories by the well-trained model. We term this phenomenon Minimal Shift Flip (MSF), which highlights the model's pronounced sensitivity to small perturbations, where even subtle structural or semantic variations of little shift can lead to abrupt prediction changes.
Despite its frequent occurrence in graph OOD generalization, this phenomenon remains poorly understood and inadequately resolved~\cite{luGraphOutofDistributionGeneralization2024,yuMindLabelShift2023}.

\begin{figure}[t]
  \centering
  \includegraphics[width=1.0\linewidth]{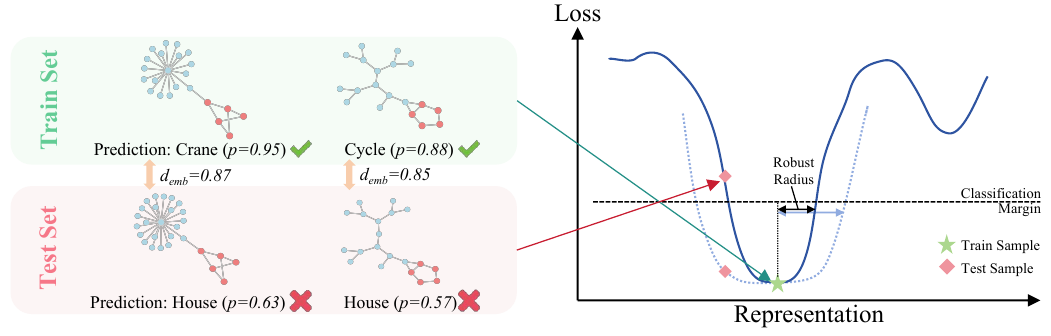}
  \caption{Minimal Shift Flip (MSF) (Left) refers to the phenomenon where test samples exhibiting minimal distributional shifts from training samples are still misclassified. The Local Robust Radius quantifies the maximum perturbation in representation space that preserves a model's prediction, reflecting the sharpness of the loss landscape in the sense of Sharpness-Aware Minimization (SAM). When the loss landscape around minima is sharp (i.e., has narrow curvature), the robust radius becomes small, making nearby test samples more likely to cross the decision boundary and be misclassified. In contrast, flatter regions provide wider robust margins and greater stability against distributional shifts. \(d_{\text{emb}}\) denotes the cosine similarity between representations.}
  \vspace{-5pt}
  \Description{xxx}
  \label{fig:sam}
\end{figure}

To address this problem, in this paper, we first provide a unified understanding of this phenomenon. Currently, Sharpness-Aware Minimization (SAM)~\cite{foretSharpnessAwareMinimizationEfficiently2021} has been successfully applied in several domains. It models the local stability of the loss landscape by analyzing the effect of small perturbations on model parameters. When a model remains stable under small perturbations in its parameters, indicating relatively \textit{flat minima} in the loss landscape, it tends to generalize better to unseen distributions, which is intuitively related to the Minimal Shift Flip problem. Moreover, it provides a theoretical foundation for bounding generalization error through local flatness analysis. 
Motivated by these insights, we extend the SAM perspective to graph out-of-distribution generalization.
However, SAM has rarely been explored for GNN generalization, since parameter perturbations are hard to define over discrete graph structures. To address this, we introduce a more practical indicator, Local Robust Radius, which quantifies the minimum perturbation required to flip the prediction. We further theoretically demonstrate that the robust radius constitutes a specific instance of the flatness-related perturbation radius in Sharpness Aware Minimization, providing a representation-level measure of landscape flatness and establishes a theoretical connection between local stability and generalization.

Based on the radius-based flatness modeling, we further observe that the radius consistently decreases as training progresses, shown in Figure \ref{fig:zeromargin}. This indicates that the model's local stability with respect to perturbations gradually diminishes in later optimization stages, causing loss landscape becomes increasingly sharp. In result, the model becomes prone to prediction flips under slight distributional shifts, explaining the emergence of the MSF phenomenon.

Building on the above analysis, we can conclude that a crucial step toward addressing the Minimal Shift Flip is to improve robust radius, thereby increasing the local flatness of the loss landscape and preventing the model from converging to sharp minima. To achieve this, existing approaches generally rely on parameter perturbations, which serve as implicit optimization and lack provable generalization guarantees. Moreover, recent studies have shown that perturbing inputs or feature representations tends to be more effective than parameter-space, as it directly enhances the model's stability with respect to distributional variations~\cite{liuDevilLowLevelFeatures2025, zouFlattenLongRangeLoss2024}. Since directly optimizing the local robustness radius is non-trivial, we introduce an unified energy-based formulation that serves as an theoretically grounded proxy for local robust radius. The energy function not only quantifies the degree of distributional shift but is also theoretically proven to be monotonically correlated with the robustness radius, providing an explicit and theoretically tractable objective for modeling flatness and local stability.

Grounded in the above analysis, in this study, we propose an energy-driven generative augmentation framework to enhance out-of-distribution generalization. Instead of perturbing samples directly in the input space, our approach introduces energy-driven perturbations in the latent space of an conditional autoencoder, where semantic consistency can be better preserved. 
Specifically, the framework consists of three stages:
\textit{Modeling:} A conditional variational autoencoder (cVAE) is first trained on in-distribution (ID) data to learn latent representations that capture semantic structure under the ID setting;
\textit{Exploration:}
An energy-guided gradient ascent is then applied in the latent space to move representations away from the ID manifold toward potential OOD regions;
\textit{Calibration:}
The cVAE decodes these pseudo-OOD representations to fine-tune the target model, improving local stability to distributional shifts. The energy function serves both as a flatness indicator and a regulator of perturbation direction and magnitude, bridging the gap between sharpness awareness and robust feature generation.
Overall, our contributions are summarized as follows:

\(\bullet\) We introduce Sharpness-Aware Minimization into GNN generalization and identify Minimal Shift Flip as a result of sharp minima and insufficient local stability, further formulating Local Robustness Radius to quantify flatness and derive a generalization bound.

\(\bullet\) We develop an energy-based formulation for local stability and theoretically prove its monotonic relationship with the robust radius, enabling explicit optimization for flatness and generalization.

\(\bullet\) We propose an energy-driven generative augmentation framework that perturbs latent representations under energy guidance to generate pseudo-OOD samples, smooth the loss landscape, and enhance robustness to distributional shifts.

\(\bullet\) Extensive experiments on multiple graph OOD benchmarks demonstrate that our method significantly outperforms state-of-the-art baselines, validating its effectiveness and generality.

\begin{figure}[ht]
  \vspace{-10pt}
  \centering
  \includegraphics[width=\linewidth]{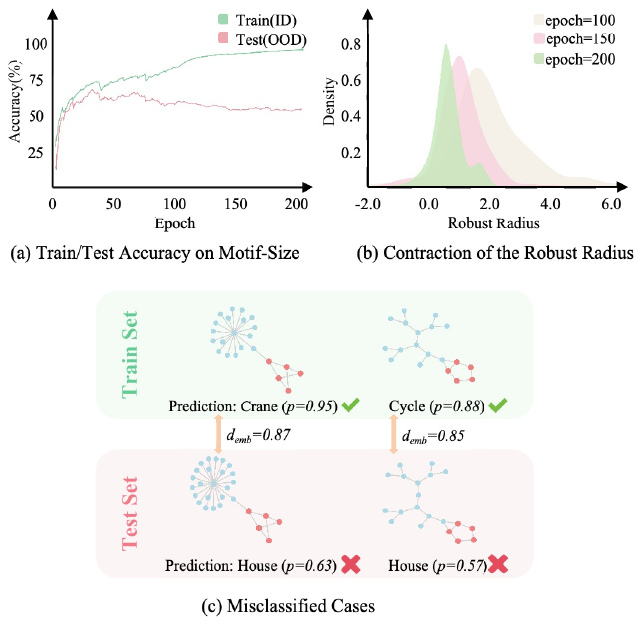}
  \vspace{-10pt}
  \caption{(a) During training, accuracy on the training set continues to rise, while test accuracy peaks and then declines. (b) During training, the robust radius shrink accordingly, which indicates sharpness of the loss landscape and misclassification of test samples, driving the drop of test accuracy.}
  \vspace{-8pt}
  \Description{(a) Over the course of training, accuracy on the training set continues to rise, while test accuracy peaks and then declines. (b) Kernel density estimates (KDE) of local robust radius at different training stages.  During training, the robust radius shrink accordingly, which indicates that many test samples cross the decision boundary and are misclassified, which drives the drop of accuracy in (a).}
  \label{fig:zeromargin}
\end{figure}


\section{Preliminaries}
\label{sec:preliminaries}

The aim of this study is to investigate the out-of-distribution (OOD) generalization problem in graph. We first introduce the notations and problem definition of graph OOD generalization:

\textbf{Notations:}
Let \(G := (\mathcal{V}, \mathcal{E})\) be an undirected graph with \(n\) nodes and \(m\) edges, represented by its adjacency matrix \(A \in \mathbb{R}^{n \times n}\) and node feature matrix \(X \in \mathbb{R}^{n \times d}\) with \(d\) feature dimensions. 
\(g_{\theta}: \mathcal{G} \mapsto \mathcal{H}\) and \(h_{\phi}: \mathcal{H} \mapsto \mathcal{Y}\) denote the GNN-based encoder and the MLP predictor with parameter \(\theta,\phi\) respectively, and \(H\) represents the graph representations  given by \(H= g_\theta (G)\). We denote the training and testing set as \(\mathcal{D}_{tr} = \{G^e\}_{e \in \mathcal{E}_{tr}},\mathcal{D}_{te} = \{G^e\}_{e \in \mathcal{E}_{te}},\mathcal{E}_{tr} \neq \mathcal{E}_{te}\). Specifically, we denote the final logits produced by the predictor \(h_\phi\) as \(S:S = \bigl[\,S_1,\dots,S_C\bigr]^\top = h_\phi(H)\). \(\widehat{Y}\) is the predicted label given by \(\widehat{Y} = softmax(S)\) while \(Y\) is the true label.

\textbf{Problem Definition:}
We focus on OOD generalization problem in graph classification. 
Given a collection of graph datasets \(\mathcal{D} = \{G^e\}_{e \in \mathcal{E}_{tr} \subseteq \mathcal{E}_{all}}\), the objective of OOD generalization on graphs is to learn an optimal model \(f^{*}(\cdot): \mathcal{G} \rightarrow \mathcal{Y}\) with data from training environments \(\mathcal{D}_{tr} = \{G^e\}_{e \in \mathcal{E}_{tr}}\) that effectively generalizes across all (unseen) environments:
\begin{equation}
    f^*(\cdot) \;=\; \arg\min_{f}\;\sup_{e\in \mathcal{E}_{all}} \;\mathcal{R}(f \mid e)
\tag{1}
\end{equation}
where
\(
\mathcal{R}(f \mid e)
 \;=\;\mathbb{E}_{G,Y}^{e}\bigl[\ell\bigl(f(G),\,Y\bigr)\bigr]
\)
is the risk of predictor \(f(\cdot)\) in environment \(e\)
, and \(\ell(\cdot,\cdot)\)
denotes the loss function. 
Specifically, \(f(\cdot) = g_\theta \circ h_\phi\) in this study.

\section{Theoretical Analysis}

\subsection{Revisiting Minimal-Shift Flip via Sharpness Aware Minimization: A Flatness Perspective}
\label{sec:zero_margin_and_msf}
To further clarify the performance degradation of GNNs under out-of-distribution generalization, we present the following theoretical analysis. We begin with the Minimal-Shift Flip problem:

\begin{definition}
\textbf{Minimal-Shift Flip (MSF).}
Let \(f:\mathbb{R}^d\to\mathbb{R}^C\) be a classifier with input \(x\in\mathbb{R}^d\).
We say that \(x\) exhibits a Minimal-Shift Flip (MSF) if there exists a perturbation
\(\delta \in \mathbb{R}^d\) satisfying that:
\begin{equation}
  \begin{aligned}
    \exists\,\delta \in \mathbb{R}^d \text{ with }&
    \|\delta\|\le \epsilon_x,\
    \|g_\theta(x+\delta)-g_\theta(x)\|\le \epsilon_h
    \\
    &\text{s.t.}\quad
    f(x+\delta)\neq f(x),
  \end{aligned}
\end{equation}

where \(g_\theta(\cdot)\) denotes the encoder (or feature extractor) of the model and \(f(x)\) denotes the prediction.

In other words, an MSF occurs when a sample with nearly identical representation to the training sample crosses the classification margin under an arbitrarily small input shift, leading to a prediction flip.
\end{definition}

Intuitively, as illustrated in Figure \ref{fig:sam} (right) before, this phenomenon can be explained by the concept of flatness in Sharpness-Aware Minimization (SAM). When the model lacks sufficient flatness, even slight perturbations can cause a sharp increase in loss, pushing the sample across the classification margin and resulting in misclassification. The definitions of flatness and perturbation radius in SAM are given as follows:

\begin{definition}[Perturbation Radius and Flatness in SAM]
\label{def:sam_radius}
Given a model parameterized by $w \in \mathbb{R}^d$ and a loss function $\mathcal{L}(w;x,y)$, 
the \textbf{perturbation radius} in Sharpness-Aware Minimization (SAM) is defined as the radius $\rho$ of a local neighborhood in parameter space within which the worst-case loss is evaluated:
\begin{equation}
\label{eq:sam_inner}
\mathcal{L}_{\mathrm{SAM}}(w) 
= \max_{\|\delta_w\|\le\rho} \mathcal{L}(w+\delta_w;x,y).
\end{equation}
The radius $\rho$ characterizes the allowable perturbation magnitude in parameter space. 
A model is said to be at a \textbf{flat minima} at $w$ if the loss variation within this neighborhood is small, i.e.,
\begin{equation}
\label{eq:flatness_measure}
\Delta_{\rho}(w) 
= \max_{\|\delta_w\|\le\rho} \big[\mathcal{L}(w+\delta_w;x,y) - \mathcal{L}(w;x,y)\big]
\end{equation}
remains small for a given $\rho$. 
Hence, $\rho$ serves as a quantitative indicator of the model's \emph{flatness}: larger admissible $\rho$ implies broader low-loss regions and higher robustness to parameter perturbations.
\end{definition}

However, directly applying SAM to GNNs is non-trivial, as perturbations in parameter space do not directly correspond to perturbations in the discrete graph structure or continuous node features. To address this, we propose a more practical and representation-level measure of flatness, the Local Robustness Radius, which quantifies the model's stability with respect to feature-level perturbations:

\begin{definition}
\textbf{Local Robustness Radius.}
The local robustness radius \(r(x)\) of a classifier \(f\) at input \(x\) is the largest radius within which all perturbations leave the prediction unchanged:
\begin{equation}
  r(x)
=\max\bigl\{r\ge0\;\bigm|\;\forall\,\|\delta\|\le r:\;f(x+\delta)=f(x)\bigr\}
\end{equation}
\end{definition}

The Local Robustness Radius \(r(x)\) measures the size of the neighborhood around \(x\) where the classifier's prediction is stable. A smaller \(r(x)\) indicates a sharpness landscape around \(x\), leaving little margin for prediction stability and making the sample highly susceptible to Minimal-Shift Flips. 

As illustrated in Figure \ref{fig:sam} (right), the connection between the local robustness radius and flatness can be intuitively understood: a larger local robustness radius corresponds to greater flatness, thereby enlarging the model's stable region around flat minima and enhancing its tolerance to perturbations and generalization ability.
Furthermore, we theoretically demonstrate that the proposed local robustness radius constitutes a special case of the perturbation radius $\rho$ in SAM:

\begin{assumption}[Local Smoothness and Lipschitz Bounds]
\label{assump:lipschitz}
Let $S(x;w)\in\mathbb{R}^C$ denote the per-class logits of a classifier parameterized by $w$, 
and $\mathcal{L}(w;x,y)=\ell(S(x;w),y)$ be its loss. 
We assume that the model is locally smooth with bounded Jacobian norms in both input and parameter spaces:
\begin{equation}
\|\nabla_x S(x;w)\|_{x\to 2}\le L_x(x;w), \\ 
\|\nabla_w S(x;w)\|_{w\to 2}\le L_w(x;w),
\end{equation}
and that the loss function $\ell$ is locally $L_\ell$-Lipschitz with respect to the logits:
\begin{equation}
\big|\ell(S,y)-\ell(S',y)\big| \le L_\ell \|S - S'\|_2.
\end{equation}
\end{assumption}

If Assumption \ref{assump:lipschitz} holds, we have the following proposition:

\begin{proposition}[Coupling Between SAM Perturbation Radius and Local Robust Radius]
\label{prop:radius_relation}
Under Assumption~\ref{assump:lipschitz}, for any perturbation $\|\delta_w\|\le\rho$ in parameter space and any $\|\delta_x\|\le r$ in input space, the local loss variation satisfies
\begin{equation}
\begin{aligned}
  \label{eq:loss_bounds}
  \mathcal{L}(w+\delta_w;x,y) - \mathcal{L}(w;x,y)
  \le L_\ell\,L_w(x;w)\,\rho, \\
  \mathcal{L}(w;x+\delta_x,y) - \mathcal{L}(w;x,y)
  \le L_\ell\,L_x(x;w)\,r.
\end{aligned}
\end{equation}
Equating the worst-case loss increase in both domains yields a first-order correspondence between $\rho$ and $r$:
\begin{equation}
\label{eq:radius_mapping}
r \approx \kappa(x;w)\,\rho, 
\qquad
\text{where} \quad 
\kappa(x;w) = \frac{L_w(x;w)}{L_x(x;w)}.
\end{equation}
Hence, the input-space \emph{local robustness radius} $r(x)$ can be viewed as an input-space counterpart of the SAM perturbation radius $\rho$.
\end{proposition}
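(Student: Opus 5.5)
The plan is to prove the two inequalities in \eqref{eq:loss_bounds} by composing the Lipschitz property of $\ell$ with a mean-value (path-integral) bound on the logits. After that, the correspondence \eqref{eq:radius_mapping} follows from matching the two worst-case upper bounds.

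For the parameter-space bound, fix $x,y$ and take $\delta_w$ with $\|\delta_w\|\le\rho$. By the $L_\ell$-Lipschitz assumption on $\ell$,
\begin{equation*}
\mathcal{L}(w+\delta_w;x,y)-\mathcal{L}(w;x,y)\le L_\ell\,\|S(x;w+\delta_w)-S(x;w)\|_2 .
\end{equation*}
To control the logit difference, write it as $\int_0^1 \nabla_w S(x;w+t\delta_w)\,\delta_w\,dt$. Then
\begin{equation*}
\|S(x;w+\delta_w)-S(x;w)\|_2\le \sup_{t\in[0,1]}\|\nabla_w S(x;w+t\delta_w)\|_{w\to 2}\,\|\delta_w\|.
\end{equation*}
I read ``locally smooth with bounded Jacobian norms'' in Assumption~\ref{assump:lipschitz} as saying that $L_w(x;w)$ bounds the Jacobian on the whole ball of radius $\rho$ around $w$. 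With that reading the supremum is at most $L_w(x;w)$, and the first inequality follows.

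The input-space bound is the same argument with the roles swapped: integrate along $x+t\delta_x$, use $\|\nabla_x S\|_{x\to2}\le L_x(x;w)$ on the ball of radius $r$ around $x$, and apply the Lipschitz property of $\ell$ again. Both results are one-sided upper bounds, but the argument actually bounds the absolute value of the loss change.

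For \eqref{eq:radius_mapping}, I would argue at first order, which the statement itself signals with ``$\approx$''. Linearize $S$ about $(x,w)$ and choose the worst-case directions, namely the top singular directions of the two Jacobians, aligned with $\nabla_S\ell$. Up to $O(\rho^2+r^2)$ terms, both bounds are then attained with comparable constants: the maximal loss increases are $\approx L_\ell L_w\rho$ and $\approx L_\ell L_x r$. An input perturbation of radius $r$ and a parameter perturbation of radius $\rho$ therefore induce the same worst-case loss change exactly when $L_x r=L_w\rho$, which gives $r=\kappa(x;w)\rho$ with $\kappa=L_w/L_x$. Identifying the largest prediction-preserving $r$ with $r(x)$ then yields the stated interpretation.

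The main obstacle is that the ``equating'' step is only heuristic. Upper bounds alone do not justify equating two radii, so some tightness is needed. I would handle this by assuming that the local constants are the actual Jacobian operator norms at $(x,w)$, attained in directions where $\nabla_S\ell$ has a nonzero component. I would also assume that the ratio $\|\nabla_S\ell\cdot J\|/\|J\|$ is similar for $J=\nabla_wS$ and $J=\nabla_xS$, so that $L_\ell$ cancels. Under these assumptions the correspondence holds to first order, and I would state it in that form rather than as an exact identity.
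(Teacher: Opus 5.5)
Your proposal is correct and follows the paper's proof sketch: you bound the change in logits by the Jacobian norms $L_w$ and $L_x$, compose with the $L_\ell$-Lipschitz property of $\ell$, and equate the two worst-case bounds $L_\ell L_w\rho \approx L_\ell L_x r$ to obtain $r\approx\kappa(x;w)\rho$. Your mean-value bound, with the Jacobian controlled on the whole ball, and your explicit tightness assumption for the equating step are more careful than the paper, which argues only at first order through a Taylor expansion and matches the bounds heuristically; the paper also adds a margin remark ($L_x(x;w)\,r<\tfrac12\gamma(x;w)$ preserves the label) to tie $r$ to the prediction-preserving radius $r(x)$, which you assert by direct identification instead.
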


The proof of Proposition \ref{prop:radius_relation} is included in Appendix \ref{sec:proof_radius_relation}.
In a conclusion, Proposition \ref{prop:radius_relation} establishes a theoretical connection between the local robustness radius and the perturbation radius in SAM, demonstrating that the local robustness radius provides a representation-level measure of flatness and stability.

\subsection{Behind the MSF: Understanding the Local Robust Radius and Its Shrinkage}
\label{sec:local_robust_radius_and_shrinkage}
While Local Robust Radius \(r(x)\) provides a practical measure of flatness of SAM, directly computing \(r(x)\) is intractable for complex models, so we derive a practical approximation based on the classification margin:

\begin{definition}[Classification Margin.]
Let \(f:\mathbb{R}^d\to\mathbb{R}^C\) be a classifier whose per class logits are \(S = \bigl[\,S_1(x),\dots,S_C(x)\bigr]^\top\).  For an input–label pair \((x,y)\), the classification margin is
\begin{equation}
  g(x)\;=\;S_{y}(x)\;-\;\max_{j\neq y}S_{j}(x)
\end{equation}
When \(g(x)>0\), \(x\) is correctly classified by \(f\).
\end{definition}

Then we can quantify the Local Robust Radius when the assumption below holds:

\begin{assumption}[Local First-Order Approximation.]
  \label{assump:local_linearization}
The classification margin $g(x)$ is locally differentiable in a neighborhood $\mathcal{U}$ of $x$ with $\nabla_x g(x)\neq 0$, and the classifier's decision surface is locally smooth around $x$.
\end{assumption}

\begin{proposition}[Margin–Radius Relation.]
  \label{prop:margin_radius_relation}
Under a first‐order Taylor approximation of \(g\) around \(x\), the local robust radius \(r(x)\) can be approximated as follows if the Assumption \ref{assump:local_linearization} holds:
\begin{equation}
  \label{eq:margin_radius_relation}
  r(x)\;\approx\;\frac{g(x)}{\|\nabla_x g(x)\|_2}\,
\end{equation}
i.e. the robustness radius is approximately the ratio of the classification margin to the norm of its gradient.
\end{proposition}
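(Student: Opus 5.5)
The plan is to replace the margin by its first-order Taylor expansion and compute the distance from $x$ to the zero level set of that linear model. Fix an input $x$ with $g(x)>0$, i.e.\ correctly classified; the case $g(x)\le 0$ is degenerate, since then $r(x)=0$ relative to the true label. The prediction $f(x+\delta)$ stays equal to $f(x)=y$ exactly as long as $g(x+\delta)>0$. Hence
\[
r(x)=\inf\{\|\delta\|_2 : g(x+\delta)\le 0\},
\]
the Euclidean distance from $x$ to the decision surface $\{g=0\}$. Under Assumption~\ref{assump:local_linearization}, $g$ is differentiable on $\mathcal{U}$, so for $x+\delta\in\mathcal{U}$
\[
g(x+\delta)=g(x)+\nabla_x g(x)^\top\delta+o(\|\delta\|_2).
\]
Dropping the remainder gives the linearized margin $\tilde g(\delta)=g(x)+\nabla_x g(x)^\top\delta$.

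Next I would solve the linearized problem $\min\{\|\delta\|_2:\tilde g(\delta)\le 0\}$ exactly. By Cauchy--Schwarz, $\nabla_x g(x)^\top\delta\ge -\|\nabla_x g(x)\|_2\|\delta\|_2$. Therefore $\tilde g(\delta)\le 0$ forces $\|\delta\|_2\ge g(x)/\|\nabla_x g(x)\|_2$. Equality is attained by
\[
\delta^*=-\,\frac{g(x)}{\|\nabla_x g(x)\|_2^2}\,\nabla_x g(x),
\]
which is well defined because $\nabla_x g(x)\neq 0$. So the linearized radius is exactly $g(x)/\|\nabla_x g(x)\|_2$. Equivalently, this is the point-to-hyperplane distance formula applied to the tangent hyperplane of the decision surface.

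Finally, I would justify the ``$\approx$''. First, $g$ is only piecewise smooth because of the $\max_{j\neq y}$. Here I would use the smoothness of the decision surface in Assumption~\ref{assump:local_linearization} to assume the maximizing competitor class $j^*$ is unique near $x$, so that locally $g=S_y-S_{j^*}$ is differentiable. Second, writing $r_{\mathrm{lin}}=g(x)/\|\nabla_x g(x)\|_2$, the exact and linearized radii differ by $O(r_{\mathrm{lin}}^2)$ when $\nabla g$ is Lipschitz, and by $o(r_{\mathrm{lin}})$ in general. To show this, I would take a tube argument along the direction $\delta^*$: at $t\delta^*$ the margin equals $g(x)(1-t)+o(\|\delta^*\|)$, so a zero crossing occurs at $t=1+o(1)$, which gives the upper bound. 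For the lower bound, $|g(x+\delta)-\tilde g(\delta)|=o(\|\delta\|)$ uniformly on small balls.

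The main obstacle is this last step. It requires the boundary point to lie inside $\mathcal{U}$, i.e.\ $r_{\mathrm{lin}}$ small relative to the size of the neighborhood. I would state this as the regime in which the approximation is meaningful, namely the small-margin MSF regime, rather than prove it globally.
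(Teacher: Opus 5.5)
Your proposal is correct and follows the paper's proof. The paper likewise drops the Taylor remainder, uses Cauchy--Schwarz to find the minimum-norm $\delta$ on the linearized boundary $\nabla g(x)^\top\delta=-g(x)$, obtains the same $\delta^*=-\frac{g(x)}{\|\nabla g(x)\|_2^2}\nabla g(x)$, and reads off $r(x)\approx g(x)/\|\nabla g(x)\|_2$; your half-space version $\tilde g(\delta)\le 0$ has the same minimizer. Your final paragraph quantifying the ``$\approx$'' goes beyond the paper, but note that an $o(r_{\mathrm{lin}})$ bound only makes sense along a family of base points with $r_{\mathrm{lin}}\to 0$, which needs a remainder uniform in the base point (e.g.\ $g\in C^1$ with $\|\nabla g\|$ bounded away from zero), not just differentiability at each $x$.
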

The proof of Proposition \ref{prop:margin_radius_relation} is included in Appendix \ref{sec:proof_margin_radius_relation}.
Proposition \ref{prop:margin_radius_relation} provides a simple and practical metric: the local robust radius can be approximated from the sample's classification margin and the gradient of the decision function. 
With this approximation, we empirically observe that during training, the local robust radius around samples steadily shrinks, which results in Minimal-Shift Flips and sharp landscape, as even tiny perturbations can push samples across the classification margin. Specifically, as shown in Figure \ref{fig:zeromargin}, we can observation that as training proceeds, the local robustness radius steadily shrinks, coinciding with growing train–test divergence and frequent misclassification.




\subsection{Energy–Radius Duality: A Monotonic Bridge}
\label{sec:energy_radius_monotonicity}
In Section \ref{sec:local_robust_radius_and_shrinkage}, we observed that the local robust radius of many samples progressively shrinks during training, resulting in a sharp loss landscape and increasing vulnerability to Minimal-Shift Flips. While the robust radius provides a direct geometric measure of local stability and flatness from SAM, it is expensive to compute in practice. To overcome this limitation, we establish a theoretical connection between the local robust radius and the energy score.

Energy score is a scalar metric derived from the classifier's logits that has been shown to correlate with distribution shift and OOD detection~\cite{liuEnergybasedOutofdistributionDetection2020a, grathwohlYourClassifierSecretly2020,wangGOLDGraphOutofDistribution2024,umSpreadingOutofDistributionDetection2024,wuPursuingFeatureSeparation2024,yuMindLabelShift2023,yuFindingDiversePredictable2022}. Specifically, higher energy scores indicate greater uncertainty and a higher likelihood of being out-of-distribution. However, these insights have not yet been fully studied in graph OOD classification tasks, which motivated our method for generating distribution-shifted representations.
\begin{definition}
  \textbf{Energy Score.}
  In the context of energy-based OOD detection, given a classifier that outputs a logit vector \(S = \bigl[\,S_1(x),\dots,S_C(x)\bigr]^\top\) for an input \(x\), the energy score is defined as:
\begin{equation}
  E(x) \;=\; -\log\sum_{i=1}^{C}\exp\bigl(S_i(x)\bigr)\,
  \label{eq:energy}
\end{equation}
\end{definition}

Under the context of graph OOD detection, a low energy score indicates that at least one classifier logit is large, signaling high confidence that the input belongs to a known (in-distribution) class. Conversely, a high energy score arises from uniformly small logits, reflecting greater model uncertainty and suggesting that the input may lie outside the training distribution (out-of-distribution). By reducing the logit vector to a single scalar, the energy score serves as an effective measure for quantifying distribution shift. We evaluated the relationship between energy scores and distribution shifts on the Motif-basis and Motif-size datasets from the GOOD benchmark \cite{guiGOODGraphOutofDistribution2022}. Results show that energy scores are significantly lower for training samples and increase under structural or size shifts, reflecting reduced model confidence and higher uncertainty on unseen distributions. These findings demonstrate that energy scores effectively capture distributional variations in graph data.
\begin{figure}[t]
  \centering
  \includegraphics[width=\linewidth]{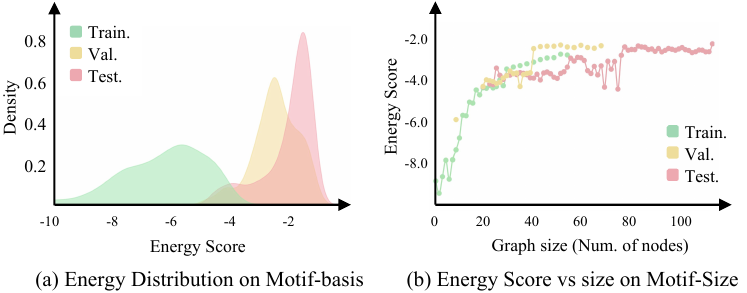}
  \caption{(a) Kernel density of energy scores on Motif-basis. Structural shifts arise as base graph types (e.g., stars, trees, ladders) in training rarely appear in validation and test sets, leading to higher energy values on unseen graphs.
  (b) Energy distribution on Motif-size. With increasing graph size from training to validation and test sets, energy scores rise consistently, indicating stronger distribution shifts.}
  \Description{(a) Kernel density estimate of energy values from the GNN model on Motif-basis. Base graph structures (stars, trees, ladder, etc.) that appear in the training set rarely occur in the validation and test set. These bases do not carry label information, but their absence creates a structural shift. As a result, energy values on the training set are typically lower than those on the validation and test sets.
  (b) The energy distribution on the Motif-size dataset. The graph-size shift is a more straightforward and observable distribution shift: validation graphs are larger than training graphs, and test graphs are even larger. The plot shows that as the shift grows, energy values rise steadily.}
  \label{fig:energy}
  \vspace{-9pt}
\end{figure}

\begin{figure*}[t]
  \centering
  \includegraphics[width=0.9\textwidth]{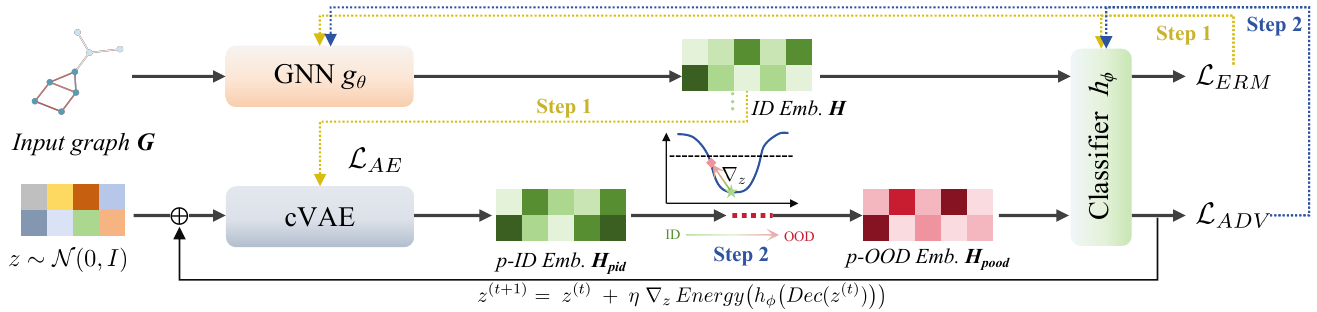}
  \caption{Step 1 (Modeling): Train the GNN and classifier, and concurrently train the cVAE to generate pseudo-ID embeddings akin to the GNN’s in-distribution representations.
  Step 2 (Exploration and Calibration): Generate class-conditioned pseudo-ID embeddings with the cVAE and apply energy-based gradient ascent to shift the pseudo-ID embeddings toward out-of-distribution regions. Then fine-tune the model on the pseudo-OOD embeddings to refine its landscape flatness.}
  \Description{}
  \label{fig:framework}
\end{figure*}



Beyond indicating distribution shift, the energy score also carries geometric significance releted to the model decision. Intuitively, as the local minima of a sample becomes sharp and its robust radius shrinks, the model's confidence decreases while the energy score increases. We formalize this intuition by proving a \textbf{monotonic relationship between energy and local robust radius}:

\begin{theorem}
\label{thm:energy_radius_relation}
Given a classifier \(f:\mathbb{R}^d\to\mathbb{R}^C\) with logits \(S = \bigl[\,S_1(x),\dots,S_C(x)\bigr]^\top\) and energy score \(E(x)\) as defined in Equation \ref{eq:energy}, the energy score \(E(x)\) is monotonically decreasing with respect to local robust radius \(r(x)\). 
\end{theorem}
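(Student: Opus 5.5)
The plan is to go through the margin--radius approximation of Proposition~\ref{prop:margin_radius_relation}. We write the energy in terms of the classification margin, show that this expression is decreasing in the margin, and then use the first-order relation $r(x)\approx g(x)/\|\nabla_x g(x)\|_2$ to convert monotonicity in $g$ into monotonicity in $r$.

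\textbf{Step 1: rewrite the energy.} Let $y$ be the predicted (or true) class. Factor out $\exp(S_y)$:
\begin{equation*}
E(x) = -S_y(x) - \log\Bigl(1+\sum_{j\neq y}\exp\bigl(S_j(x)-S_y(x)\bigr)\Bigr).
\end{equation*}
Each gap satisfies $S_j - S_y \le -g(x)$. Hence the log term lies between $\log\bigl(1+e^{-g(x)}\bigr)$ and $\log\bigl(1+(C-1)e^{-g(x)}\bigr)$.

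\textbf{Step 2: isolate the margin dependence.} Hold the ``scale'' of the logits fixed: fix $S_y$ (or the mean logit), and take the runner-up logit to be the dominant competitor. Then $E$ becomes a function $\psi(g) = -S_y - \log\bigl(1+c\,e^{-g}\bigr)$ with $c\in[1,C-1]$.

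Differentiating gives $\psi'(g)= c e^{-g}/(1+c e^{-g})$. This is positive, so $\psi$ is increasing in $g$ if we measure only the competitor mass. That is the wrong sign for the claim, so the normalization must be chosen carefully. The natural fix is to argue that along a perturbation path decreasing $r$, the decrease in $g$ is driven by the top logit $S_y$ falling toward the others. Concretely, take the competitors fixed and $S_y = S_{j^*} + g$. Then
\begin{equation*}
E = -\log\Bigl(e^{S_{j^*}+g} + \sum_{j\neq y} e^{S_j}\Bigr),
\end{equation*}
which is strictly decreasing in $g$. Since $\partial E/\partial S_y = -\mathrm{softmax}_y(S) < 0$, this derivative is simply minus the confidence. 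I would commit to this parametrization, which matches the paper's intuition that high energy means uniformly small logits and low confidence.

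\textbf{Step 3: transfer to $r$.} Under Assumption~\ref{assump:local_linearization}, treat $\|\nabla_x g\|_2$ as locally constant, so that $g \approx r\,\|\nabla_x g\|_2$ is increasing in $r$. Composing the two maps, $dE/dr \approx -\mathrm{softmax}_y(S)\,\|\nabla_x g\|_2 < 0$, which yields the claimed monotone decrease.

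The main obstacle is Step 2. Energy is not a function of the margin alone, because adding a constant to all logits changes $E$ but not $g$. The statement therefore only holds under a ceteris paribus convention: competitor logits fixed, with the variation carried by $S_y$. A second leg of the same issue is the gradient-norm constancy needed in Step 3. I would state both conventions explicitly as part of the proof, and note that with them the chain $r\uparrow \Rightarrow g\uparrow \Rightarrow S_y\uparrow \Rightarrow E\downarrow$ is rigorous to first order.
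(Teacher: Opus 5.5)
Your proof is correct under the two conventions you make explicit, but it departs from the paper's argument at exactly the step that decides the sign.

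\textbf{The paper's route.} The paper holds the top logit $L$ fixed and writes $E(x)=-L-\log(1+\sum_{j\neq i^*}e^{S_j-L})$. It derives the sandwich $-L-\log(1+(C-1)e^{-\gamma})\le E(x)\le -L$, and then asserts that the lower bound is strictly decreasing in $\gamma$.

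\textbf{Why that step fails.} Your Step 2 computation shows the opposite: the bound has derivative $(C-1)e^{-\gamma}/(1+(C-1)e^{-\gamma})>0$. With $L$ fixed, a wider margin means a lower runner-up logit, hence a smaller partition function and a higher energy. In the equal-competitor case, where the bound is attained, $E$ itself increases with $\gamma$. So the fixed-$L$ route cannot deliver the claimed direction. In any case, monotonicity of a bound would not transfer to $E$.

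\textbf{The binary case.} Here the paper quietly uses a different normalization, the symmetric centering $S_1=\gamma/2$, $S_2=-\gamma/2$ (mean logit fixed). Under it the top logit rises with $\gamma$, and $E=-\log(2\cosh(\gamma/2))$ does decrease. The paper's remark that recentering by $L$ ``does not affect monotonicity in $\gamma$'' overlooks your point: a common shift of the logits changes $E$ but not $g$. The choice of what is held fixed is therefore exactly what determines the sign.

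\textbf{What your route buys.} Your convention (competitors fixed, variation carried by $S_y$) is in the spirit of the binary centering. It gives an exact first-order derivative, $\partial E/\partial g=-\mathrm{softmax}_y(S)<0$, rather than a bound. It also states openly the second hypothesis, local constancy of $\|\nabla_x g\|_2$, which the paper's final appeal to the margin--radius relation uses implicitly.

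\textbf{What the paper's route offers.} In exchange, the paper gets a convention-free two-sided bound relating $E$ to $L$ and $\gamma$, but that bound alone does not settle monotonicity. Keep both of your conventions in the statement itself. Without some such normalization the claim is not well-posed, and under the paper's own fixed-$L$ normalization it has the wrong sign.
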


\textbf{Remark.}
This theorem formalizes that distribution-shift samples within sharp minima have smaller local robust radii, which lead to higher energy scores and lower prediction confidence. 
The proof of Theorem \ref{thm:energy_radius_relation} is included in Appendix \ref{sec:proof_energy_radius_relation}.
This conclusion is consistent with the empirical findings reported in Sections~\ref{sec:zero_margin_and_msf} and~\ref{sec:energy_radius_monotonicity}. Moreover, it offers a principled rationale for employing the energy score as a explicit surrogate for both distributional shift and local stability, serving as a tractable measure of the loss landscape sharpness and stability from the perspective of SAM. 

The energy formulation provides a theoretically grounded surrogate that jointly characterizes both distribution shift and robustness.
Building on these, we propose our \textbf{Energy-Guided Dual-stage Augmentation (E2A)} method to improve OOD generalization. The procedure consists of two main steps:
(1) \textbf{Energy as an indicator of distribution shift}: we apply energy-based gradient ascent to produce perturbations and push pseudo-ID representations toward distributional shift.
(2) \textbf{Energy as a proxy for the local robust radius}: we fine-tune the model with the energy-based objective to refine its landscape, thereby enlarging the robust radius.

\section{Methodology}
\subsection{Overview} 
This section provides a detailed description of our energy-based perturbation training method. The overall framework is depicted in Figure \ref{fig:framework}. In summary, our approach consists of three stages: the \textbf{Modeling} Stage, which learns the in-distribution representation distribution via a conditional autoencoder; the \textbf{Exploration} Stage, which perturbs pseudo-ID samples toward OOD regions under energy based guidance; and the \textbf{Calibration} Stage, which fine-tunes the model to refine its sharpness loss landscape and thereby enlarge the robust radius.

\subsection{Modeling: cVAE as Pseudo Generator}
\label{sec:cvAE}
In the modeling stage, we train a conditional Variational Autoencoder (cVAE) to model the in-distribution (ID) representation distribution produced by a GNN-based encoder, which provides a generative mechanism for generating pseudo-ID representations. This involves two sequential steps:

\textbf{Step 1: Training a GNN via Empirical Risk Minimization}

We denote \(g_\theta\) as the GNN encoder and \(h_\phi\) as the downstream classifier. Under the standard empirical risk minimization (ERM) objective on the training set \(\mathcal D_{\mathrm{train}}\), we solve the classification task by minimizing the cross-entropy loss:
\begin{equation}
  \label{eq:erm_objective}
  \mathcal{L}_{ERM}(\theta,\phi)
  =\mathbb{E}{(G,y)\sim\mathcal D_{\mathrm{train}}}
  \bigl[\mathrm{CE}\bigl(h_\phi\bigl(g_\theta(G)\bigr),\,y\bigr)\bigr]
\end{equation}

where \(G = (A,X)\) is the input graph and \(\mathrm{CE}(\cdot,\cdot)\) is the cross‐entropy loss. And the GNN produces graph‐level embeddings \(H\) of train datasets that serve as the ID representations:
\begin{equation}
  \label{eq:get_id_representation}
  H = g_\theta(G)\;\in\;\mathbb{R}^d
\end{equation}

\textbf{Step 2: Learning ID Representations with cVAE}
Given each training pair \(\bigl(H,y\bigr)\) with
\begin{equation}
  H = g_\theta(G)\in \mathbb{R}^d,
  \quad
  y\in\{1,\dots,C\}
\end{equation}

We train the conditional VAE with an encoder \(Enc(\cdot)\) and decoder \(Dec(\cdot)\) as follows:

The encoder takes the representation \(H\) and the conditional label \(y\) as inputs and maps them to a distribution over the latent variable \(z\), parameterized by mean \(\mu\) and variance \(\sigma^2\):
\begin{equation}
  \mu, \log \sigma^2 = \text{Enc}(x, y)
\end{equation}

The decoder takes the latent variable \(z\) and the conditional label \(y\) as inputs and reconstructs the representation \(\widehat{H}\):
\begin{equation}
  \label{eq:get_reconstructed_representation}
  \widehat{H} = \text{Dec}(z, y),\quad
  \text{where}\quad z \sim \mathcal{N}(\mu, \sigma^2)
\end{equation}

Then the cVAE is trained with the loss function in Equation \ref{eq:cVAE_loss}.

During inference, the cVAE samples the latent variable \(z\) directly from the conditional prior and then uses the decoder to generate the corresponding pseudo representation \(H_{pid}\):
\begin{equation}
  H_{pid} = \text{Dec}(z, y),\quad
  \text{where}\quad z \sim \mathcal{N}(0, I)
\end{equation}
More details about the cVAEare provided in Appendix \ref{apdx:cVAE}.
By leveraging the cVAE, we can perform multiple conditional samplings in the latent space based on class labels. Moreover, the autoencoder ensures that the resulting embeddings stay within the in‑distribution representation range of each class and do not produce anomalous embeddings that conflict with the original graph semantics.

\subsection{Exploration: From Pseudo ID to OOD}
\label{sec:energy_gradient_ascent}
The exploration stage applies energy-guided gradient ascent to drive these pseudo-ID samples toward OOD-like regions, effectively simulating distribution shift and exposing the model to challenging decision boundaries.

To achieve this, we employ an energy-based approach to generate pseudo-OOD representations by iteratively perturbing the latent variable \(z\) in the latent space of the cVAE. The energy function allows us to measure how far the representations are from the in-distribution (train set).

Let the classifier's output logits for an embedding H be
\begin{equation}
  S = \bigl[\,S_1(x),\dots,S_C(x)\bigr]^\top
= h_\phi\bigl(H\bigr)\quad\text{for } H  \in \mathbb{R}^d
\end{equation}

and define its energy by
\begin{equation}
  E(S)\;=\;-\,\log\sum_{c=1}^C\exp\bigl(S_c(x)\bigr).
\end{equation}

For an ID embedding \(H\) from the train sample, denote
\begin{equation}
  e_{id}\;=\;E(S)\;=\;E\bigl(h_\phi (H) \bigr)
  \text{, s.t. } H=g_\theta(G), G \in \mathcal{D}_{tr}
\end{equation}

For a generated sample \(\hat H=Dec(z)\) decoded by cVAE, denote
\begin{equation}
  e_p\;=\;E\bigl(h_\phi (\hat H) \bigr)
\end{equation}

Our goal is to maximize the energy gap
\begin{equation}
  \Delta e \;=\; e_p\;-\;e_{id}
\end{equation}

So that the pseudo-embeddings lie as far as possible from the original ID distribution in energy space. To achieve this in a controlled way, we perform multi-step gradient ascent in the cVAE's latent space to achieve this, amplifying the gap while avoiding excessively large, semantically meaningless perturbations:
\begin{equation}
  \label{eq:gradient_ascent}
  z^{(t+1)}
  =\;z^{(t)}
  \;+\;\eta\;\nabla_{z}\,E \,\!\bigl(h_\phi\bigl(Dec(z^{(t)})\bigr)\bigr),
  \,
  t=0,\dots,T-1
\end{equation}
where \(\eta\) is the step size and \(T\) is the number of iterations, initial \(z^{(0)} \) is sampled from \( \mathcal{N}(0, I)\).

By iterating small updates, we iteratively move \(\hat H=Dec(z)\) from the pseudo-ID region toward a high-energy OOD region, thus generating high-quality \textit{hard} OOD samples for subsequent perturbation fine-tuning of the classifier. Finally, we obtain the pseudo-OOD representations by decoding the perturbed latent variable:
\begin{equation}
  \label{eq:get_ood_representation}
  \begin{aligned}
  & H_{pood} = \text{Dec}(z^{(T)}), \\
  \text{where}\, z^{(T)} = & z^{(0)} +
  \sum_{t=0}^{T-1} \eta \nabla_{z} E \,\!\bigl(h_\phi\bigl(Dec(z^{(t)})\bigr)\bigr)
  \end{aligned}
\end{equation}

\subsection{Calibration: Decision Boundary Refinement}
\label{sec:perturbation_finetuning}
The calibration stage fine-tunes the classifier with an energy-based objective to refine the sharp minima, thereby enlarging the local robust radius and enhancing the model's generalization ability under distribution shifts.

With the aim of guiding the sharp minima of the classifier with pseudo-OOD representations, we formulate our calibration training objective as follows:
\begin{equation}
  \begin{aligned}
  \label{eq:adv_objective}
  \mathcal{L}_{ADV} & (\phi)  = 
   \left | \left |  e_{ood}-e_{id}\, \right |\right | _2^2
  + \lambda \cdot \mathrm{CE}\bigl(h_\phi\bigl(H_{pood}\bigr),\,y\bigr)\,,\\
  & \text{where } e_{id} = E\bigl(h_\phi(H)\bigr), e_{ood} = E\bigl(h_\phi(H_{pood})\bigr)
  \end{aligned}
\end{equation}

The first term is an calibrating loss that encourages the model to lower the energy of high-energy (low-confidence) pseudo-OOD embeddings by moving them into the low-energy region of ID samples. This creates a sufficiently flatness at the sharp minima, improving the model's ability to correctly classify slightly shifted or genuine OOD samples beyond the boundary and enhancing overall robustness;
The second cross entropy term is a supervised classification loss that ensures the model not only pulls pseudo-OOD samples back toward the low-energy region but also shifts them to the correct side of the boundary corresponding to their true labels, thus preventing the decision boundary from expanding in the wrong direction.
\(\lambda\) is a hyperparameter coefficient.

The full training procedure is presented in Algorithm \ref{alg:optimization}.

\begin{algorithm}[ht]
\caption{Energy-Guided Dual-stage Augmentation}
\label{alg:optimization}
\begin{algorithmic}[1]
\Statex \textbf{Input:}
    Graph $G=(\mathbf{A},\mathbf{X})\in \mathcal{D}_{tr}$, GNN \(g_\theta\), MLP classifier \(h_\phi\), and cVAE encoder \(Enc(\cdot)\) and decoder \(Dec(\cdot)\), loss coefficients $\lambda$, number of perturbation iterations \(T\) and the step size \(\eta\), total training epochs \(E_1\) and calibration epochs \(E_2\).
\Statex \textbf{Output:}
    Optimised GNN, MLP classifier, and cVAE.
\State \textbf{for} epoch = 1,...,E1 \textbf{do}
    \Comment{Modeling} 
    \State
    \hspace{1em}
    Update GNN \(g_\theta\) and MLP classifier \(h_\phi\) from Eq.~\ref{eq:erm_objective}
    \State 
    \hspace{1em}
    Obtain $H$ for ID representation with GNN from Eq.~\ref{eq:get_id_representation}
    \State
    \hspace{1em}
    Get reconstructed representation \(\widehat{H}\) with cVAE from Eq.~\ref{eq:get_reconstructed_representation}
    \State
    \hspace{1em}
    Update cVAE with \(H.detach()\) and \(\widehat{H}\) from Eq.~\ref{eq:cVAE_loss}
    \State
    \hspace{1em}
    \textbf{if} E1-epoch < E2 \textbf{then}
    \Comment{Exploration}
    \State 
    \hspace{2em}
    Obtain $H$ for ID representation with GNN from Eq.~\ref{eq:get_id_representation}
    \State 
    \hspace{2em}
    Sample \(z^{(0)}\sim \mathcal{N}(0, I)\)
    \State 
    \hspace{2em}
    \textbf{for} \(t = 0, \dots, T-1\) \textbf{do}
    \State 
    \hspace{3em}
    \(z^{(t+1)} = z^{(t)} + \eta \nabla_{z} E \,\!\bigl(h_\phi\bigl(Dec(z^{(t)})\bigr)\bigr)\) from Eq.~\ref{eq:gradient_ascent}
    \State
    \hspace{2em}
    \(H_{pood} = Dec(z^{(T)})\) from Eq.~\ref{eq:get_ood_representation}
    \Comment{Calibration}
    \State
    \hspace{2em}
    Update GNN \(g_\theta\) and MLP classifier \(h_\phi\) with \(H\) and \(H_{pood}\) from Eq.~\ref{eq:adv_objective}
\State \textbf{end while}
\end{algorithmic}
\end{algorithm}

\section{Experiments}
\subsection{Datasets, Metrics and Baselines}
We adopt two widely used benchmarks for assessing graph OOD generalization—GraphOOD~\cite{guiGOODGraphOutofDistribution2022} and DrugOOD~\cite{jiDrugOODOutofDistributionOOD2022}, across seven datasets: Motif, CMNIST, HIV, SST2, and Twitter from GraphOOD, and EC50 and IC50 from DrugOOD. These datasets span synthetic, image, molecular, and text graphs. Each dataset contains one or more domains such as graph size and scaffolds and is divided into domain-based splits, thereby introducing distribution shifts.
ROC-AUC metric is used for the binary classification dataset and Accuracy for the others. Refer to the Appendix \ref{apdx:datasets} for dataset details.

Following \cite{guiJointLearningLabel2023}, we employ GIN for both the extractor and predictor, set \((\lambda_1,\lambda_2)=(0.1,0.01)\), and retain the original learning-rate and batch-size settings reported in~\cite{guiGOODGraphOutofDistribution2022}. We compare our method with several competitive baselines, including empirical risk minimization (ERM), three traditional out-of-distribution (OOD) baselines including IRM~\cite{arjovskyInvariantRiskMinimization2020}, VREx~\cite{kruegerOutofDistributionGeneralizationRisk2021}, and Coral~\cite{sunDeepCORALCorrelation2016}, and eight graph-specific OOD baselines including DIR~\cite{wuDiscoveringInvariantRationales2021}, GIL~\cite{liLearningInvariantGraph2022}, GSAT~\cite{miaoInterpretableGeneralizableGraph2022}, CIGA~\cite{chenLearningCausallyInvariant2022}, LECI~\cite{guiJointLearningLabel2023}, iMoLD~\cite{zhuangLearningInvariantMolecular2023}, EQuAD~\cite{yaoEmpoweringGraphInvariance2024} and LIRS~\cite{yaoLearningGraphInvariance2024}. Refer to Appendix \ref{apdx:baselines_details} for details about baselines.

\subsection{Comparison with State-of-the-Art methods}
\begin{table*}[ht]
    \caption{Results on GraphOOD and DrugOOD dataset in 3 rounds.}
    \vspace{-5pt}
    \centering
    \setlength{\tabcolsep}{0.7pt}
    \begin{tabular*}{\textwidth}{@{\extracolsep{\fill}}cccccccccccccc}
    \toprule
    Dataset & \multicolumn{2}{c}{Motif} & CMNIST & \multicolumn{2}{c}{HIV} & SST2 & Twitter & \multicolumn{3}{c}{IC50} & \multicolumn{3}{c}{EC50} \\
    \midrule
    Domain & size & basis & color & scaffold & size & length & length & scaffold & size & assay & scaffold & size & assay \\
    \midrule
    ERM    & 53.46{\scriptsize(4.08)}  & 63.8{\scriptsize(10.36)} & 27.82{\scriptsize(3.24)} & 69.55{\scriptsize(2.39)} & 59.19{\scriptsize(2.29)} & 80.52{\scriptsize(1.13)} & 57.04{\scriptsize(1.70)} & 68.79{\scriptsize(0.47)} & 67.50{\scriptsize(0.38)} & 71.63{\scriptsize(0.76)} & 64.98{\scriptsize(1.29)} & 65.10{\scriptsize(0.38)} & 67.39{\scriptsize(2.90)} \\
    IRM    & 53.68{\scriptsize(4.11)}  & 59.93{\scriptsize(11.46)} & 29.04{\scriptsize(2.10)} & 70.17{\scriptsize(2.78)} & 59.94{\scriptsize(1.59)} & 80.75{\scriptsize(1.17)} & 57.72{\scriptsize(1.03)} & 67.22{\scriptsize(0.62)} & 61.58{\scriptsize(0.58)} & 71.15{\scriptsize(0.57)} & 63.86{\scriptsize(1.36)} & 59.19{\scriptsize(0.83)} & 67.77{\scriptsize(2.71)} \\
    Coral  & 53.71{\scriptsize(2.75)}  & 66.23{\scriptsize(9.01)}  & 29.47{\scriptsize(3.15)} & 70.69{\scriptsize(2.25)} & 59.39{\scriptsize(2.90)} & 78.94{\scriptsize(1.22)} & 56.14{\scriptsize(1.76)} & 68.36{\scriptsize(0.61)} & 64.53{\scriptsize(0.32)} & 71.28{\scriptsize(0.91)} & 64.83{\scriptsize(1.64)} & 58.47{\scriptsize(0.43)} & 72.08{\scriptsize(2.80)} \\
    VREx   & 54.47{\scriptsize(3.42)}  & 66.53{\scriptsize(4.04)}  & 27.65{\scriptsize(2.31)} & 69.34{\scriptsize(3.54)} & 58.49{\scriptsize(2.28)} & 80.20{\scriptsize(1.39)} & 56.37{\scriptsize(0.76)} & 67.32{\scriptsize(0.53)} & 63.47{\scriptsize(0.41)} & 70.53{\scriptsize(0.86)} & 63.63{\scriptsize(0.96)} & 59.89{\scriptsize(0.41)} & 69.28{\scriptsize(2.34)} \\
    \midrule
    DIR    & 44.83{\scriptsize(4.00)}  & 39.99{\scriptsize(5.50)}  & 26.20{\scriptsize(4.48)} & 68.44{\scriptsize(2.51)} & 57.67{\scriptsize(3.75)} & 81.55{\scriptsize(1.06)} & 56.81{\scriptsize(0.91)} & 66.33{\scriptsize(0.65)} & 62.92{\scriptsize(1.89)} & 69.84{\scriptsize(1.41)} & 63.76{\scriptsize(3.22)} & 61.56{\scriptsize(4.23)} & 65.81{\scriptsize(2.93)} \\
    GIL    & 53.92{\scriptsize(3.88)}  & 64.23{\scriptsize(5.98)}  & 27.13{\scriptsize(2.17)} & 69.43{\scriptsize(2.31)} & 59.27{\scriptsize(3.39)} & 80.43{\scriptsize(1.73)} & 55.40{\scriptsize(2.64)} & 65.38{\scriptsize(0.72)} & 63.06{\scriptsize(1.92)} & 69.71{\scriptsize(1.63)} & 62.56{\scriptsize(3.84)} & 61.73{\scriptsize(3.36)} & 66.84{\scriptsize(2.27)} \\
    GSAT   & 60.76{\scriptsize(5.94)}  & 55.13{\scriptsize(5.41)}  & 35.62{\scriptsize(5.52)} & 70.07{\scriptsize(1.76)} & 60.73{\scriptsize(2.39)} & 81.49{\scriptsize(0.76)} & 56.07{\scriptsize(0.53)} & 66.45{\scriptsize(0.50)} & 66.70{\scriptsize(0.37)} & 70.59{\scriptsize(0.43)} & 64.25{\scriptsize(0.63)} & 62.65{\scriptsize(1.79)} & 73.82{\scriptsize(2.62)} \\
    CIGA   & 54.42{\scriptsize(3.11)}  & 67.15{\scriptsize(8.19)}  & 32.11{\scriptsize(2.53)} & 69.40{\scriptsize(1.97)} & 59.55{\scriptsize(2.56)} & 80.46{\scriptsize(2.00)} & 57.19{\scriptsize(1.15)} & 69.14{\scriptsize(0.70)} & 66.92{\scriptsize(0.54)} & 71.86{\scriptsize(1.37)} & 67.32{\scriptsize(1.35)} & 65.65{\scriptsize(0.82)} & 69.15{\scriptsize(5.79)} \\
    LECI   & 71.43{\scriptsize(1.96)}  & 73.16{\scriptsize(2.22)} & \underline{51.80{\scriptsize(2.53)}} & 71.36{\scriptsize(1.52)} & 65.44{\scriptsize(1.78)} & \underline{83.44{\scriptsize(0.27)}} & 57.63{\scriptsize(0.14)} & / & / & / & / & / & / \\
    iMoLD  & 58.23{\scriptsize(0.43)}  & 65.58{\scriptsize(1.27)}  & 48.35{\scriptsize(2.44)} & \underline{72.93{\scriptsize(2.29)}} & 62.86{\scriptsize(2.58)} & 82.13{\scriptsize(0.69)} & 56.46{\scriptsize(1.74)} & 68.84{\scriptsize(0.58)} & 67.92{\scriptsize(0.43)} & 72.11{\scriptsize(0.51)} & 67.79{\scriptsize(0.88)} & 67.09{\scriptsize(0.91)} & 77.48{\scriptsize(1.70)} \\
    EQuAD  & 59.72{\scriptsize(3.69)}  & 67.11{\scriptsize(10.11)} & 48.98{\scriptsize(2.36)} & 72.24{\scriptsize(0.64)} & 64.19{\scriptsize(0.56)} & 82.57{\scriptsize(0.36)} & 57.47{\scriptsize(1.43)} & 69.27{\scriptsize(0.86)} & 68.19{\scriptsize(0.24)} & \textbf{73.26{\scriptsize(0.47)}} & 68.12{\scriptsize(0.48)} & 66.37{\scriptsize(0.64)} & 79.36{\scriptsize(0.73)} \\
    LIRS   & \underline{74.95{\scriptsize(7.69)}} & \underline{75.51{\scriptsize(2.19)}} & 49.87{\scriptsize(2.62)} & 72.82{\scriptsize(1.61)} & \underline{66.64{\scriptsize(1.44)}} & 82.48{\scriptsize(0.79)} & \underline{58.29{\scriptsize(1.03)}} & \underline{69.78{\scriptsize(0.41)}} & \underline{68.32{\scriptsize(0.33)}} & 72.56{\scriptsize(0.83)} & \underline{68.17{\scriptsize(0.46)}} & \underline{67.23{\scriptsize(0.54)}} & \underline{79.46{\scriptsize(1.58)}} \\
    \midrule
    \textbf{E2A} & \textbf{77.96{\scriptsize(2.36)}} & \textbf{77.43{\scriptsize(4.12)}} & \textbf{52.17{\scriptsize(3.43)}} & \textbf{73.31{\scriptsize(1.27)}} & \textbf{67.54{\scriptsize(2.63)}} & \textbf{83.52{\scriptsize(1.02)}} & \textbf{59.33{\scriptsize(1.13)}} & \textbf{69.84{\scriptsize(1.54)}} & \textbf{69.16{\scriptsize(0.46)}} & \underline{72.93{\scriptsize(0.72)}} & \textbf{68.84{\scriptsize(1.14)}} & \textbf{68.32{\scriptsize(0.59)}} & \textbf{80.13{\scriptsize(0.77)}} \\
    \bottomrule
    \end{tabular*}
    \label{tab:eval_good_drug}
    \end{table*}
Table \ref{tab:eval_good_drug} present a comparison with E2A and other OOD methods on the GraphOOD and DrugOOD datasets.
From the results, we can conclude that E2A attains state-of-the-art performance on 12 out of 13 datasets and achieves comparable results on the rest, demonstrating its superior generalization ability across different types of datasets and domain shifts.
Compared to other state-of-the-art approaches, E2A does not require pre-collected environment labels (not provided in many datasets like EC50 and IC50) as in LECI, or rely on self-supervised contrastive learning to encode spurious correlations as in EQuAD and LIRS, or train the additional codebook as in iMoLD. Instead, our method only fine-tunes the GNN and classifier via energy-based augmentation and has been shown to be effective across datasets of diverse scales and tasks.

Moreover, against the ERM, E2A delivers substantial gains on multiple datasets, achieving accuracy improvements of up to 24.5 \% on synthetic dataset Motif-size and up to 12.74 \% on real-world dataset EC50-assay. These results underscore the effectiveness of the energy based augmentation for addressing distribution shift.

\subsection{Ablation Study}
\label{sec:ablation_study}
In the ablation study, we compare E2A with several baselines and variants:
(1) ERM: GNN and classifier trained with the ERM objective.
(2) SAM\(\star\): apply perturbations directly to GNN-generated embeddings as current SAM-based works.
(3) w/o ENG.: removing the energy term in the loss of Eq. \ref{eq:adv_objective}.
(4) w/o CE.: removing the cross-entropy term in the loss of Eq. \ref{eq:adv_objective}.

\begin{table}[htbp]
  \vspace{-5pt}
\centering
\caption{Ablation Study.}
\vspace{-10pt}
\resizebox{0.8\columnwidth}{!}{
\begin{tabular}{ccccccc}
\toprule
\multicolumn{1}{c}{Dataset} & \multicolumn{2}{c}{Motif-size}  & \multicolumn{2}{c}{Motif-basis} & \multicolumn{2}{c}{HIV-scaffold} \\ \midrule
\multicolumn{1}{c}{Test Set} & ID & OOD & ID & OOD & ID & OOD \\ \midrule
ERM    & 90.73 & 53.46 & \textbf{92.60} & 63.80 & 81.53 & 69.55 \\ 
SAM\(\star\)  & 53.46 & 28.53 & 63.86 & 27.36 & 69.55 & 67.42 \\ 
w/o ENG. & 75.41 & 41.63 & 75.89 & 52.47 & 74.07 & 67.96 \\ 
w/o CE.  & 76.12 & 49.64 & 77.45 & 57.03 & 77.28 & 68.41 \\ 
\textbf{E2A}    & \textbf{91.23} & \textbf{77.96} & \underline{92.17 }& \textbf{77.43} & \textbf{81.64} & \textbf{73.31} \\ \bottomrule
\end{tabular}}
\label{tab:ablation}
\vspace{-5pt}
\end{table}

We evaluate these methods on the synthetic Motif dataset in size and basis domain and the real-world HIV-Scaffold dataset. Table \ref{tab:ablation} summarizes the results:
1. E2A markedly improves robustness to distribution shift with almost no degradation in in-distribution (ID) accuracy. In contrast, ERM suffers a sharp drop on the OOD test set, although it achieves the highest ID accuracy.
2. Directly perturbing representations as current SAM-based works severely degrades classification performance on both ID and OOD data. This confirms that attacks in the autoencoder's latent space are more effective than direct embedding-level perturbations.
3. Removing either term in Eq. \ref{eq:adv_objective} degrades OOD and ID performance, showing that both the perturbation and classification components are essential for OOD generalization while preserving in-distribution accuracy.

\subsection{Hyperparameter Sensitivity}
\label{sec:hyperparameter_sensitivity}
\begin{figure}[t]
  \centering
  \includegraphics[width=\linewidth]{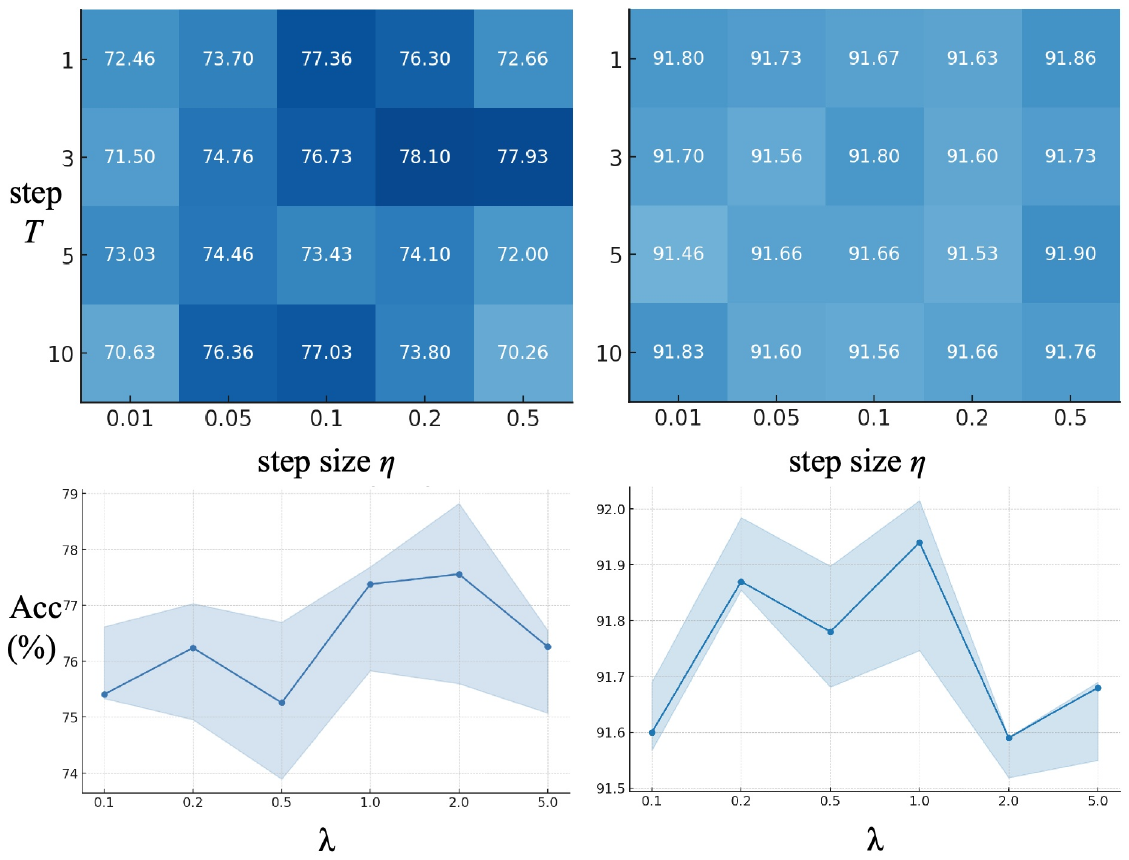}
  \caption{Parameter sensitivity analysis of E2A on the Motif-Size dataset. The top row shows how the setting of perturbation steps and the step size affects model performance, with the bottom row showing the effect of the parameter \(\lambda\). The left and right panels displaying results on the in-distribution and out-of-distribution test sets respectively.}
  \Description{}
  \label{fig:sensitivity_analysis_adv}
  \vspace{-20pt}
\end{figure}
In this section, we conduct a sensitivity analysis of E2A's hyperparameters on the Motif-size dataset, including the number of perturbation steps \(T\) and the step size \(\eta\) in Equation \ref{eq:get_ood_representation} of the perturbation training process, as well as the \(\lambda\) coefficient in Equation \ref{eq:adv_objective}.

From the results in Figure \ref{fig:sensitivity_analysis_adv}, we can conclude that:
1. The performance of E2A is relatively stable across a wide range of hyperparameter values, indicating its robustness to hyperparameter choices.
2. The number of iterations \(T\) has a significant impact on the performance. This suggests a sufficient number of iterations is necessary to effectively perturb the latent variable and generate high-quality pseudo-OOD representations, but too many iterations may lead to overfitting on the pseudo-OOD samples.
3. The step size \(\eta\) also plays a crucial role. A too large step size may cause instability in the optimization process.
4. The coefficient \(\lambda\) in Equation \ref{eq:adv_objective} balances the perturbation and the classification loss, indicating both terms are important for achieving good OOD generalization while maintaining ID accuracy. A too large \(\lambda\) may lead to overfitting on the pseudo-OOD samples, while a too small \(\lambda\) may not provide enough guidance for the classifier to learn from the pseudo-OOD samples.
5. E2A almost do not degrade the in-distribution performance, even with large perturbation steps and step size. 

\subsection{Verification and Beyond}
In this section, we verify the effectiveness of E2A in improving the generalization of GNNs in the context of out-of-distribution.

\subsubsection{pseudo-OOD and real OOD}
\label{sec:ood_vs_pseudo_ood}
To verify the effectiveness of the pseudo-OOD representations generated by E2A, we compare them with real ID and OOD representations on Motif-size dataset. The results are shown in Figure \ref{fig:pseudo_ood_vs_real_ood}, indicating that energy-based gradient ascent on \(z\) effectively pushes the representations away from the ID distribution to the OOD direction.
\begin{figure}[t]
  \centering
  \includegraphics[width=\linewidth]{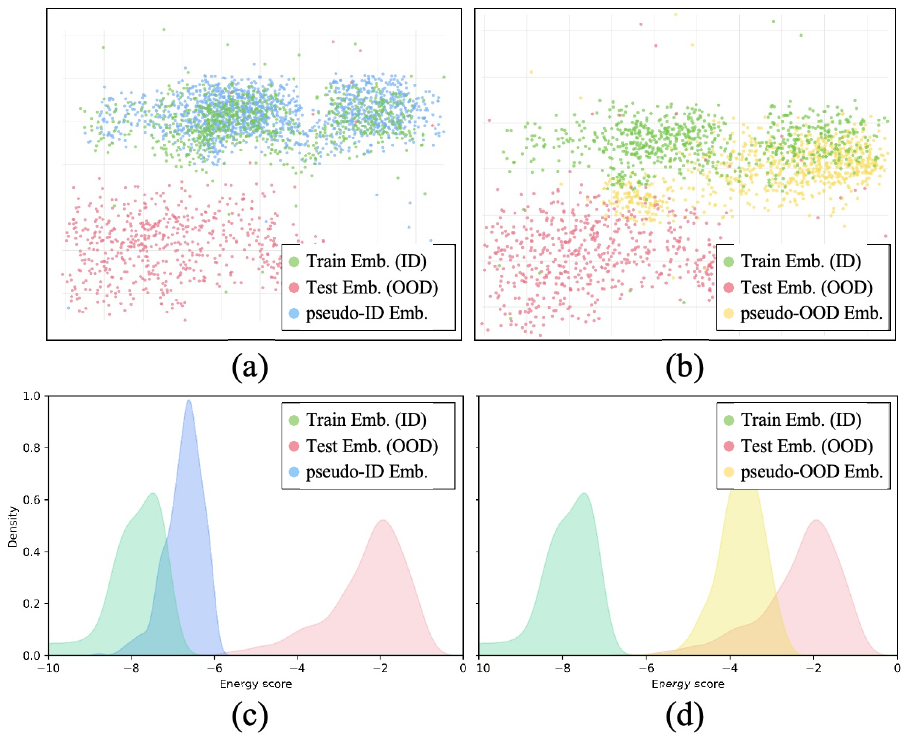}
  \caption{Energy distribution of these pseudo-representations before (Left) and after (Right) perturbations via KDE.}
  \Description{}
  \label{fig:pseudo_ood_vs_real_ood}
  \vspace{-1pt}
\end{figure}
\subsubsection{Fixing Minimal-Shift Flip via E2A}
\label{sec:fixing_robustness_radius}
We further investigate if E2A can fix the minimal-shift flip problem on the Motif-size dataset.
We tracked the model's accuracy and robust radius on both ID and OOD test sets throughout training, as shown in Figure \ref{fig:radius}. The results demonstrate that perturbation training expands the robust radius and thus enhances the model's resilience to distribution shifts, all while leaving in-distribution performance essentially unchanged. 
\begin{figure}[ht]
  \centering
  \includegraphics[width=\linewidth]{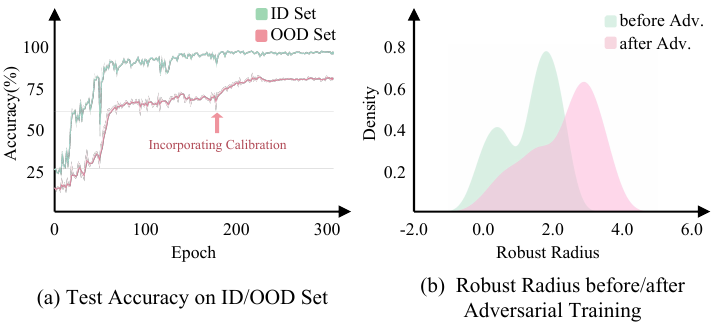}
  \vspace{-8pt}
  \caption{(a) E2A's performance on in-distribution and out-of-distribution test sets during training. Energy-based augmentation significantly improves OOD performance while having negligible impact on ID performance. (b) Changes in the model's robust radius before and after perturbation training, which shows that perturbation and calibration increases the robust radius, leading to better OOD generalization.}
  \Description{}
  \label{fig:radius}
  \vspace{-10pt}
\end{figure}

\subsubsection{Efficiency of E2A}
\label{sec:efficiency_of_E2A}
We evaluate the efficiency of perturbation in E2A in terms of training/inference time (per round) on the Motif-size. Results in Table \ref{tab:step_results} shows that E2A's perturbation perturbations have a negligible impact on efficiency compared to ERM. 
\begin{table}[ht]
\centering
\setlength{\tabcolsep}{2pt}
\caption{Train/Inference Time(ms) under Different \(T\) and \(\eta\).}
\vspace{-10pt}
\resizebox{0.85\columnwidth}{!}{
\begin{tabular}{lccccccc}
\toprule
 & \multicolumn{1}{c}{} & \multicolumn{3}{c}{\(\eta=0.1\)} & \multicolumn{3}{c}{\(\eta=1.0\)} \\
 \cmidrule(lr){3-5} \cmidrule(lr){6-8}
 & ERM & \(T=1\) & \(T=5\) & \(T=10\) & \(T=1\) & \(T=5\) & \(T=10\) \\
\midrule
Train & 8515.7 & 9790.1 & 11574 & 14406.7 & 9686.9 & 11786.4 & 14634.7 \\
Inference & 937.7 & 936.8 & 945.2 & 941.5 & 963.6 & 954.3 & 956.9 \\
\bottomrule
\end{tabular}}
\label{tab:step_results}
\vspace{-10pt}
\end{table}

\section{Related Work}
\textbf{Out-of-distribution Generalization in Graph:}
OOD generalization is a critical challenge in graph learning, where models trained on a specific data distribution often fail to generalize well to unseen distributions.
IRM~\cite{arjovskyInvariantRiskMinimization2020}, which seeks to learn causally relevant subgraphs or representations that remain stable across different environments, is widely adopted in graph generalization~\cite{liLearningInvariantGraph2022, yangLearningSubstructureInvariance2022, piaoImprovingOutofDistributionGeneralization2024}. 
Since IRM relies on distinct environments, recently some methods try to 
introduce frameworks like self-supervised learning, information bottleneck, or infomax to
learn invariant subgraphs or representations bypassing IRM~\cite{chenDoesInvariantGraph2023, chenLearningCausallyInvariant2022, wuDiscoveringInvariantRationales2021, liGraphStructureExtrapolation2024, zhuangLearningInvariantMolecular2023, yaoEmpoweringGraphInvariance2024, yaoLearningGraphInvariance2024, miaoInterpretableGeneralizableGraph2022, wangGOODATTestTimeGraph2024,qiuQuantifyingDistributionalInvariance2025}. 

\textbf{Data Augmentation and Perturbation-based methods for Graph OOD Generalization:}
Recent data augmentation techniques have empirically improved out-of-distribution generalization.
Most current data augmentation methods are inspired by Mixup~\cite{zhangMixupEmpiricalRisk2018}, which generates augmented samples by interpolating between pairs of examples~\cite{liDisentangledGraphSelfsupervised2024, yuFindingDiversePredictable2022, yuMindLabelShift2023, liGraphStructureExtrapolation2024}. Some works employ adversarial training to enhance model robustness by adopting perturbation samples through node or edge modifications\cite{alamDomainAdaptationAdversarial2018, goschAdversarialTrainingGraph2023a, yangGraphAdversarialSelfSupervised2021}. 
Refer to Appendix \ref{apdx:related_work} for a more detailed related work.
\section{Conclusion}
This work revisits graph generalization from the perspective of Sharpness-Aware Minimization. We introduce the Local Robustness Radius and establish its theoretical link to the energy, who acts as a practical proxy for flatness and stability. Building on this, an energy-driven generative augmentation framework (E2A) is proposed to expand the stability and improve OOD generalization.

\begin{acks}
  This work is supported by the National Key Research and Development Program of China under grant 2024YFC3307900; the National Natural Science Foundation of China under grants 62376103, 62436003, 62206102 and 62302184; Major Science and Technology Project of Hubei Province under grant 2025BAB011, 2024BAA008; Hubei Science and Technology Talent Service Project under grant 2024DJC078.
\end{acks}

\bibliographystyle{ACM-Reference-Format}
\balance
\bibliography{Energy}

@inproceedings{qiuQuantifyingDistributionalInvariance2025,
  title = {Quantifying {{Distributional Invariance}} in {{Causal Subgraph}} for {{IRM-Free Graph Generalization}}},
  booktitle = {The {{Thirty-ninth Annual Conference}} on {{Neural Information Processing Systems}}},
  author = {Qiu, Yang and Zou, Yixiong and Wang, Jun and Liu, Wei and Fu, Xiangyu and Li, Ruixuan},
  year = 2025,
  urldate = {2026-01-22}
}

@misc{alamDomainAdaptationAdversarial2018,
  title = {Domain {{Adaptation}} with {{Adversarial Training}} and {{Graph Embeddings}}},
  author = {Alam, Firoj and Joty, Shafiq and Imran, Muhammad},
  year = {2018},
  number = {arXiv:1805.05151},
  eprint = {1805.05151},
  primaryclass = {cs},
  publisher = {arXiv},
  urldate = {2025-10-07},
  archiveprefix = {arXiv}
}

@misc{arjovskyInvariantRiskMinimization2020,
  title = {Invariant {{Risk Minimization}}},
  author = {Arjovsky, Martin and Bottou, L{\'e}on and Gulrajani, Ishaan and {Lopez-Paz}, David},
  year = {2020},
  number = {arXiv:1907.02893},
  eprint = {1907.02893},
  primaryclass = {cs, stat},
  publisher = {arXiv},
  urldate = {2024-06-03},
  archiveprefix = {arXiv}
}

@article{chenDoesInvariantGraph2023,
  title = {Does {{Invariant Graph Learning}} via {{Environment Augmentation Learn Invariance}}?},
  author = {Chen, Yongqiang and Bian, Yatao and Zhou, Kaiwen and Xie, Binghui and Han, Bo and Cheng, James},
  year = {2023},
  journal = {Advances in Neural Information Processing Systems},
  volume = {36},
  pages = {71486--71519},
  urldate = {2025-04-27}
}

@article{chenLearningCausallyInvariant2022,
  title = {Learning {{Causally Invariant Representations}} for {{Out-of-Distribution Generalization}} on {{Graphs}}},
  author = {Chen, Yongqiang and Zhang, Yonggang and Bian, Yatao and Yang, Han and Kaili, M. A. and Xie, Binghui and Liu, Tongliang and Han, Bo and Cheng, James},
  year = {2022},
  journal = {Advances in Neural Information Processing Systems},
  volume = {35},
  pages = {22131--22148},
  urldate = {2025-01-10}
}

@misc{daiGraphTransferLearning2022,
  title = {Graph {{Transfer Learning}} via {{Adversarial Domain Adaptation}} with {{Graph Convolution}}},
  author = {Dai, Quanyu and Wu, Xiao-Ming and Xiao, Jiaren and Shen, Xiao and Wang, Dan},
  year = {2022},
  number = {arXiv:1909.01541},
  eprint = {1909.01541},
  primaryclass = {cs},
  publisher = {arXiv},
  urldate = {2025-07-18},
  archiveprefix = {arXiv}
}

@misc{fengGraphAdversarialTraining2019,
  title = {Graph {{Adversarial Training}}: {{Dynamically Regularizing Based}} on {{Graph Structure}}},
  shorttitle = {Graph {{Adversarial Training}}},
  author = {Feng, Fuli and He, Xiangnan and Tang, Jie and Chua, Tat-Seng},
  year = {2019},
  number = {arXiv:1902.08226},
  eprint = {1902.08226},
  primaryclass = {cs},
  publisher = {arXiv},
  urldate = {2025-07-18},
  archiveprefix = {arXiv}
}

@misc{foretSharpnessAwareMinimizationEfficiently2021,
  title = {Sharpness-{{Aware Minimization}} for {{Efficiently Improving Generalization}}},
  author = {Foret, Pierre and Kleiner, Ariel and Mobahi, Hossein and Neyshabur, Behnam},
  year = {2021},
  number = {arXiv:2010.01412},
  eprint = {2010.01412},
  primaryclass = {cs},
  publisher = {arXiv},
  urldate = {2025-10-06},
  archiveprefix = {arXiv}
}

@misc{goodfellowExplainingHarnessingAdversarial2015,
  title = {Explaining and {{Harnessing Adversarial Examples}}},
  author = {Goodfellow, Ian J. and Shlens, Jonathon and Szegedy, Christian},
  year = {2015},
  number = {arXiv:1412.6572},
  eprint = {1412.6572},
  primaryclass = {stat},
  publisher = {arXiv},
  urldate = {2025-07-13},
  archiveprefix = {arXiv}
}

@misc{goschAdversarialTrainingGraph2023,
  title = {Adversarial {{Training}} for {{Graph Neural Networks}}: {{Pitfalls}}, {{Solutions}}, and {{New Directions}}},
  shorttitle = {Adversarial {{Training}} for {{Graph Neural Networks}}},
  author = {Gosch, Lukas and Geisler, Simon and Sturm, Daniel and Charpentier, Bertrand and Z{\"u}gner, Daniel and G{\"u}nnemann, Stephan},
  year = {2023},
  number = {arXiv:2306.15427},
  eprint = {2306.15427},
  primaryclass = {cs},
  publisher = {arXiv},
  urldate = {2025-07-18},
  archiveprefix = {arXiv}
}

@article{goschAdversarialTrainingGraph2023a,
  title = {Adversarial {{Training}} for {{Graph Neural Networks}}: {{Pitfalls}}, {{Solutions}}, and {{New Directions}}},
  shorttitle = {Adversarial {{Training}} for {{Graph Neural Networks}}},
  author = {Gosch, Lukas and Geisler, Simon and Sturm, Daniel and Charpentier, Bertrand and Z{\"u}gner, Daniel and G{\"u}nnemann, Stephan},
  year = {2023},
  journal = {Advances in Neural Information Processing Systems},
  volume = {36},
  pages = {58088--58112},
  urldate = {2025-10-07}
}

@misc{grathwohlYourClassifierSecretly2020,
  title = {Your {{Classifier}} Is {{Secretly}} an {{Energy Based Model}} and {{You Should Treat}} It {{Like One}}},
  author = {Grathwohl, Will and Wang, Kuan-Chieh and Jacobsen, J{\"o}rn-Henrik and Duvenaud, David and Norouzi, Mohammad and Swersky, Kevin},
  year = {2020},
  number = {arXiv:1912.03263},
  eprint = {1912.03263},
  primaryclass = {cs},
  publisher = {arXiv},
  urldate = {2025-07-25},
  archiveprefix = {arXiv}
}

@article{guiGOODGraphOutofDistribution2022,
  title = {{{GOOD}}: {{A Graph Out-of-Distribution Benchmark}}},
  shorttitle = {{{GOOD}}},
  author = {Gui, Shurui and Li, Xiner and Wang, Limei and Ji, Shuiwang},
  year = {2022},
  journal = {Advances in Neural Information Processing Systems},
  volume = {35},
  pages = {2059--2073},
  urldate = {2024-12-25}
}

@article{guiJointLearningLabel2023,
  title = {Joint {{Learning}} of {{Label}} and {{Environment Causal Independence}} for {{Graph Out-of-Distribution Generalization}}},
  author = {Gui, Shurui and Liu, Meng and Li, Xiner and Luo, Youzhi and Ji, Shuiwang},
  year = {2023},
  journal = {Advances in Neural Information Processing Systems},
  volume = {36},
  pages = {3945--3978},
  urldate = {2025-01-06}
}

@misc{guoIfMixupInterpolatingGraph2022,
  title = {{{ifMixup}}: {{Interpolating Graph Pair}} to {{Regularize Graph Classification}}},
  shorttitle = {{{ifMixup}}},
  author = {Guo, Hongyu and Mao, Yongyi},
  year = {2022},
  number = {arXiv:2110.09344},
  eprint = {2110.09344},
  primaryclass = {cs},
  publisher = {arXiv},
  urldate = {2025-07-25},
  archiveprefix = {arXiv}
}

@inproceedings{guoInvestigatingOutofDistributionGeneralization2024,
  title = {Investigating {{Out-of-Distribution Generalization}} of {{GNNs}}: {{An Architecture Perspective}}},
  shorttitle = {Investigating {{Out-of-Distribution Generalization}} of {{GNNs}}},
  booktitle = {Proceedings of the 30th {{ACM SIGKDD Conference}} on {{Knowledge Discovery}} and {{Data Mining}}},
  author = {Guo, Kai and Wen, Hongzhi and Jin, Wei and Guo, Yaming and Tang, Jiliang and Chang, Yi},
  year = {2024},
  pages = {932--943},
  publisher = {ACM},
  address = {Barcelona Spain},
  urldate = {2025-03-27},
  isbn = {979-8-4007-0490-1}
}

@misc{hanGMixupGraphData2022,
  title = {G-{{Mixup}}: {{Graph Data Augmentation}} for {{Graph Classification}}},
  shorttitle = {G-{{Mixup}}},
  author = {Han, Xiaotian and Jiang, Zhimeng and Liu, Ninghao and Hu, Xia},
  year = {2022},
  number = {arXiv:2202.07179},
  eprint = {2202.07179},
  primaryclass = {cs},
  publisher = {arXiv},
  urldate = {2025-07-25},
  archiveprefix = {arXiv}
}

@inproceedings{yiRandomRegistersCrossDomain2025,
  title = {Random {{Registers}} for {{Cross-Domain Few-Shot Learning}}},
  booktitle = {Forty-Second {{International Conference}} on {{Machine Learning}}},
  author = {Yi, Shuai and Zou, Yixiong and Li, Yuhua and Li, Ruixuan},
  year = 2025,
  urldate = {2026-01-26}
}

@inproceedings{huOpenGraphBenchmark2020a,
  title = {Open {{Graph Benchmark}}: {{Datasets}} for {{Machine Learning}} on {{Graphs}}},
  shorttitle = {Open {{Graph Benchmark}}},
  booktitle = {Advances in {{Neural Information Processing Systems}}},
  author = {Hu, Weihua and Fey, Matthias and Zitnik, Marinka and Dong, Yuxiao and Ren, Hongyu and Liu, Bowen and Catasta, Michele and Leskovec, Jure},
  year = {2020},
  volume = {33},
  pages = {22118--22133},
  publisher = {Curran Associates, Inc.},
  urldate = {2024-12-25}
}

@misc{jiDrugOODOutofDistributionOOD2022,
  title = {{{DrugOOD}}: {{Out-of-Distribution}} ({{OOD}}) {{Dataset Curator}} and {{Benchmark}} for {{AI-aided Drug Discovery}} -- {{A Focus}} on {{Affinity Prediction Problems}} with {{Noise Annotations}}},
  shorttitle = {{{DrugOOD}}},
  author = {Ji, Yuanfeng and Zhang, Lu and Wu, Jiaxiang and Wu, Bingzhe and Huang, Long-Kai and Xu, Tingyang and Rong, Yu and Li, Lanqing and Ren, Jie and Xue, Ding and Lai, Houtim and Xu, Shaoyong and Feng, Jing and Liu, Wei and Luo, Ping and Zhou, Shuigeng and Huang, Junzhou and Zhao, Peilin and Bian, Yatao},
  year = {2022},
  number = {arXiv:2201.09637},
  eprint = {2201.09637},
  primaryclass = {cs, q-bio},
  publisher = {arXiv},
  urldate = {2024-07-16},
  archiveprefix = {arXiv}
}

@inproceedings{kipfSemiSupervisedClassificationGraph2017,
  title = {Semi-{{Supervised Classification}} with {{Graph Convolutional Networks}}},
  booktitle = {5th {{International Conference}} on {{Learning Representations}}, {{ICLR}} 2017, {{Toulon}}, {{France}}, {{April}} 24-26, 2017, {{Conference Track Proceedings}}},
  author = {Kipf, Thomas N. and Welling, Max},
  year = {2017},
  publisher = {OpenReview.net},
  urldate = {2024-12-24}
}

@inproceedings{kongRobustOptimizationData2022,
  title = {Robust {{Optimization As Data Augmentation}} for {{Large-Scale Graphs}}},
  booktitle = {Proceedings of the {{IEEE}}/{{CVF Conference}} on {{Computer Vision}} and {{Pattern Recognition}}},
  author = {Kong, Kezhi and Li, Guohao and Ding, Mucong and Wu, Zuxuan and Zhu, Chen and Ghanem, Bernard and Taylor, Gavin and Goldstein, Tom},
  year = {2022},
  pages = {60--69},
  urldate = {2025-07-18}
}

@inproceedings{kruegerOutofDistributionGeneralizationRisk2021,
  title = {Out-of-{{Distribution Generalization}} via {{Risk Extrapolation}} ({{REx}})},
  booktitle = {Proceedings of the 38th {{International Conference}} on {{Machine Learning}}},
  author = {Krueger, David and Caballero, Ethan and Jacobsen, Joern-Henrik and Zhang, Amy and Binas, Jonathan and Zhang, Dinghuai and Priol, Remi Le and Courville, Aaron},
  year = {2021},
  pages = {5815--5826},
  publisher = {PMLR},
  urldate = {2024-07-29}
}

@article{kwonExtraMixExtrapolatableData2022,
  title = {{{ExtraMix}}: {{Extrapolatable Data Augmentation}} for {{Regression}} Using {{Generative Models}}},
  shorttitle = {{{ExtraMix}}},
  author = {Kwon, Kisoo and Jeong, Kuhwan and Park, Sanghyun and Park, Sangha and Lee, Hoshik and Kwak, Seung-Yeon and Kim, Sungmin and Cho, Kyunghyun},
  year = {2022},
  urldate = {2025-07-18}
}

@inproceedings{liDisentangledGraphSelfsupervised2024,
  title = {Disentangled {{Graph Self-supervised Learning}} for {{Out-of-Distribution Generalization}}},
  booktitle = {Forty-First {{International Conference}} on {{Machine Learning}}},
  author = {Li, Haoyang and Wang, Xin and Zhang, Zeyang and Chen, Haibo and Zhang, Ziwei and Zhu, Wenwu},
  year = {2024},
  urldate = {2024-10-30}
}

@inproceedings{liGraphStructureExtrapolation2024,
  title = {Graph {{Structure Extrapolation}} for {{Out-of-Distribution Generalization}}},
  booktitle = {Proceedings of the 41st {{International Conference}} on {{Machine Learning}}},
  author = {Li, Xiner and Gui, Shurui and Luo, Youzhi and Ji, Shuiwang},
  year = {2024},
  pages = {27846--27874},
  publisher = {PMLR},
  address = {Vienna, Austria},
  urldate = {2025-03-04}
}

@article{liLearningInvariantGraph2022,
  title = {Learning {{Invariant Graph Representations}} for {{Out-of-Distribution Generalization}}},
  author = {Li, Haoyang and Zhang, Ziwei and Wang, Xin and Zhu, Wenwu},
  year = {2022},
  journal = {Advances in Neural Information Processing Systems},
  volume = {35},
  pages = {11828--11841},
  urldate = {2024-07-27}
}

@misc{liSpectralAdversarialTraining2022,
  title = {Spectral {{Adversarial Training}} for {{Robust Graph Neural Network}}},
  author = {Li, Jintang and Peng, Jiaying and Chen, Liang and Zheng, Zibin and Liang, Tingting and Ling, Qing},
  year = {2022},
  number = {arXiv:2211.10896},
  eprint = {2211.10896},
  primaryclass = {cs},
  publisher = {arXiv},
  urldate = {2025-07-18},
  archiveprefix = {arXiv}
}

@inproceedings{liuDevilLowLevelFeatures2025,
  title = {The {{Devil}} Is in {{Low-Level Features}} for {{Cross-Domain Few-Shot Segmentation}}},
  booktitle = {Proceedings of the {{IEEE}}/{{CVF Conference}} on {{Computer Vision}} and {{Pattern Recognition}}},
  author = {Liu, Yuhan and Zou, Yixiong and Li, Yuhua and Li, Ruixuan},
  year = {2025},
  pages = {4618--4627},
  urldate = {2025-10-06}
}

@inproceedings{liuEnergybasedOutofdistributionDetection2020a,
  title = {Energy-Based {{Out-of-distribution Detection}}},
  booktitle = {Advances in {{Neural Information Processing Systems}}},
  author = {Liu, Weitang and Wang, Xiaoyun and Owens, John and Li, Yixuan},
  year = {2020},
  volume = {33},
  pages = {21464--21475},
  publisher = {Curran Associates, Inc.},
  urldate = {2025-07-25}
}

@inproceedings{liuGraphRationalizationEnvironmentbased2022,
  title = {Graph {{Rationalization}} with {{Environment-based Augmentations}}},
  booktitle = {Proceedings of the 28th {{ACM SIGKDD Conference}} on {{Knowledge Discovery}} and {{Data Mining}}},
  author = {Liu, Gang and Zhao, Tong and Xu, Jiaxin and Luo, Tengfei and Jiang, Meng},
  year = {2022},
  eprint = {2206.02886},
  primaryclass = {cs},
  pages = {1069--1078},
  urldate = {2025-02-26},
  archiveprefix = {arXiv}
}

@article{luGraphOutofDistributionGeneralization2024,
  title = {Graph {{Out-of-Distribution Generalization With Controllable Data Augmentation}}},
  author = {Lu, Bin and Zhao, Ze and Gan, Xiaoying and Liang, Shiyu and Fu, Luoyi and Wang, Xinbing and Zhou, Chenghu},
  year = {2024},
  journal = {IEEE Transactions on Knowledge and Data Engineering},
  volume = {36},
  number = {11},
  pages = {6317--6329},
  urldate = {2025-10-07}
}

@misc{madryDeepLearningModels2019a,
  title = {Towards {{Deep Learning Models Resistant}} to {{Adversarial Attacks}}},
  author = {Madry, Aleksander and Makelov, Aleksandar and Schmidt, Ludwig and Tsipras, Dimitris and Vladu, Adrian},
  year = {2019},
  number = {arXiv:1706.06083},
  eprint = {1706.06083},
  primaryclass = {stat},
  publisher = {arXiv},
  urldate = {2025-07-18},
  archiveprefix = {arXiv}
}

@inproceedings{miaoInterpretableGeneralizableGraph2022,
  title = {Interpretable and {{Generalizable Graph Learning}} via {{Stochastic Attention Mechanism}}},
  booktitle = {Proceedings of the 39th {{International Conference}} on {{Machine Learning}}},
  author = {Miao, Siqi and Liu, Mia and Li, Pan},
  year = {2022},
  pages = {15524--15543},
  publisher = {PMLR},
  urldate = {2025-01-13}
}

@inproceedings{piaoImprovingOutofDistributionGeneralization2024,
  title = {Improving {{Out-of-Distribution Generalization}} in {{Graphs}} via {{Hierarchical Semantic Environments}}},
  booktitle = {Proceedings of the {{IEEE}}/{{CVF Conference}} on {{Computer Vision}} and {{Pattern Recognition}}},
  author = {Piao, Yinhua and Lee, Sangseon and Lu, Yijingxiu and Kim, Sun},
  year = {2024},
  pages = {27631--27640},
  urldate = {2025-07-30}
}

@article{shenDomainadaptiveMessagePassing2023a,
  title = {Domain-Adaptive {{Message Passing Graph Neural Network}}},
  author = {Shen, Xiao and Pan, Shirui and Choi, Kup-Sze and Zhou, Xi},
  year = {2023},
  journal = {Neural Networks},
  volume = {164},
  eprint = {2308.16470},
  primaryclass = {cs},
  pages = {439--454},
  urldate = {2025-07-18},
  archiveprefix = {arXiv}
}

@inproceedings{shiOvercomingCatastrophicForgetting2021,
  title = {Overcoming {{Catastrophic Forgetting}} in {{Incremental Few-Shot Learning}} by {{Finding Flat Minima}}},
  booktitle = {Advances in {{Neural Information Processing Systems}}},
  author = {SHI, Guangyuan and CHEN, {\relax JIAXIN} and Zhang, Wenlong and Zhan, Li-Ming and Wu, Xiao-Ming},
  year = {2021},
  volume = {34},
  pages = {6747--6761},
  publisher = {Curran Associates, Inc.},
  urldate = {2025-10-08}
}

@inproceedings{sunDeepCORALCorrelation2016,
  title = {Deep {{CORAL}}: {{Correlation Alignment}} for {{Deep Domain Adaptation}}},
  shorttitle = {Deep {{CORAL}}},
  booktitle = {Computer {{Vision}} -- {{ECCV}} 2016 {{Workshops}}},
  author = {Sun, Baochen and Saenko, Kate},
  editor = {Hua, Gang and J{\'e}gou, Herv{\'e}},
  year = {2016},
  pages = {443--450},
  publisher = {Springer International Publishing},
  address = {Cham},
  isbn = {978-3-319-49409-8}
}

@inproceedings{umSpreadingOutofDistributionDetection2024,
  title = {Spreading {{Out-of-Distribution Detection}} on {{Graphs}}},
  booktitle = {The {{Thirteenth International Conference}} on {{Learning Representations}}},
  author = {Um, Daeho and Lim, Jongin and Kim, Sunoh and Yeo, Yuneil and Jung, Yoonho},
  year = {2024},
  urldate = {2025-06-04}
}

@inproceedings{velickovicGraphAttentionNetworks2018,
  title = {Graph {{Attention Networks}}},
  booktitle = {International {{Conference}} on {{Learning Representations}}},
  author = {Veli{\v c}kovi{\'c}, Petar and Cucurull, Guillem and Casanova, Arantxa and Romero, Adriana and Li{\`o}, Pietro and Bengio, Yoshua},
  year = {2018},
  urldate = {2024-12-25}
}

@inproceedings{wangGOLDGraphOutofDistribution2024,
  title = {{{GOLD}}: {{Graph Out-of-Distribution Detection}} via {{Implicit Adversarial Latent Generation}}},
  shorttitle = {{{GOLD}}},
  booktitle = {The {{Thirteenth International Conference}} on {{Learning Representations}}},
  author = {Wang, Danny and Qiu, Ruihong and Bai, Guangdong and Huang, Zi},
  year = {2024},
  urldate = {2025-06-04}
}

@article{wangGOODATTestTimeGraph2024,
  title = {{{GOODAT}}: {{Towards Test-Time Graph Out-of-Distribution Detection}}},
  shorttitle = {{{GOODAT}}},
  author = {Wang, Luzhi and He, Dongxiao and Zhang, He and Liu, Yixin and Wang, Wenjie and Pan, Shirui and Jin, Di and Chua, Tat-Seng},
  year = {2024},
  journal = {Proceedings of the AAAI Conference on Artificial Intelligence},
  volume = {38},
  number = {14},
  pages = {15537--15545},
  urldate = {2025-03-27}
}

@inproceedings{wangMixupNodeGraph2021,
  title = {Mixup for {{Node}} and {{Graph Classification}}},
  booktitle = {Proceedings of the {{Web Conference}} 2021},
  author = {Wang, Yiwei and Wang, Wei and Liang, Yuxuan and Cai, Yujun and Hooi, Bryan},
  year = {2021},
  pages = {3663--3674},
  publisher = {ACM},
  address = {Ljubljana Slovenia},
  urldate = {2025-07-25},
  copyright = {https://creativecommons.org/licenses/by/4.0/}
}

@misc{wuAdversarialWeightPerturbation2023,
  title = {Adversarial {{Weight Perturbation Improves Generalization}} in {{Graph Neural Networks}}},
  author = {Wu, Yihan and Bojchevski, Aleksandar and Huang, Heng},
  year = {2023},
  number = {arXiv:2212.04983},
  eprint = {2212.04983},
  primaryclass = {cs},
  publisher = {arXiv},
  urldate = {2025-07-18},
  archiveprefix = {arXiv}
}

@inproceedings{wuDiscoveringInvariantRationales2021,
  title = {Discovering {{Invariant Rationales}} for {{Graph Neural Networks}}},
  booktitle = {International {{Conference}} on {{Learning Representations}}},
  author = {Wu, Yingxin and Wang, Xiang and Zhang, An and He, Xiangnan and Chua, Tat-Seng},
  year = {2021},
  urldate = {2024-12-25}
}

@inproceedings{wuDomainAdversarialGraphNeural2019a,
  title = {Domain-{{Adversarial Graph Neural Networks}} for {{Text Classification}}},
  booktitle = {2019 {{IEEE International Conference}} on {{Data Mining}} ({{ICDM}})},
  author = {Wu, Man and Pan, Shirui and Zhu, Xingquan and Zhou, Chuan and Pan, Lei},
  year = {2019},
  pages = {648--657},
  publisher = {IEEE},
  address = {Beijing, China},
  urldate = {2025-07-18},
  copyright = {https://ieeexplore.ieee.org/Xplorehelp/downloads/license-information/IEEE.html}
}

@inproceedings{wuPursuingFeatureSeparation2024,
  title = {Pursuing {{Feature Separation}} Based on {{Neural Collapse}} for {{Out-of-Distribution Detection}}},
  booktitle = {The {{Thirteenth International Conference}} on {{Learning Representations}}},
  author = {Wu, Yingwen and Yu, Ruiji and Cheng, Xinwen and He, Zhengbao and Huang, Xiaolin},
  year = {2024},
  urldate = {2025-06-04}
}

@misc{xueCAPCoAdversarialPerturbation2021,
  title = {{{CAP}}: {{Co-Adversarial Perturbation}} on {{Weights}} and {{Features}} for {{Improving Generalization}} of {{Graph Neural Networks}}},
  shorttitle = {{{CAP}}},
  author = {Xue, Haotian and Zhou, Kaixiong and Chen, Tianlong and Guo, Kai and Hu, Xia and Chang, Yi and Wang, Xin},
  year = {2021},
  number = {arXiv:2110.14855},
  eprint = {2110.14855},
  primaryclass = {cs},
  publisher = {arXiv},
  urldate = {2025-07-18},
  archiveprefix = {arXiv}
}

@inproceedings{xuHowPowerfulAre2018,
  title = {How {{Powerful}} Are {{Graph Neural Networks}}?},
  booktitle = {International {{Conference}} on {{Learning Representations}}},
  author = {Xu, Keyulu and Hu, Weihua and Leskovec, Jure and Jegelka, Stefanie},
  year = {2018},
  urldate = {2024-12-25}
}

@inproceedings{yangGraphAdversarialSelfSupervised2021,
  title = {Graph {{Adversarial Self-Supervised Learning}}},
  booktitle = {Advances in {{Neural Information Processing Systems}}},
  author = {Yang, Longqi and Zhang, Liangliang and Yang, Wenjing},
  year = {2021},
  volume = {34},
  pages = {14887--14899},
  publisher = {Curran Associates, Inc.},
  urldate = {2025-10-07}
}

@inproceedings{yangLearningSubstructureInvariance2022,
  title = {Learning {{Substructure Invariance}} for {{Out-of-Distribution Molecular Representations}}},
  booktitle = {Advances in {{Neural Information Processing Systems}}},
  author = {Yang, Nianzu and Zeng, Kaipeng and Wu, Qitian and Jia, Xiaosong and Yan, Junchi},
  year = {2022},
  urldate = {2025-01-02}
}

@misc{yaoEmpoweringGraphInvariance2024,
  title = {Empowering {{Graph Invariance Learning}} with {{Deep Spurious Infomax}}},
  author = {Yao, Tianjun and Chen, Yongqiang and Chen, Zhenhao and Hu, Kai and Shen, Zhiqiang and Zhang, Kun},
  year = {2024},
  number = {arXiv:2407.11083},
  eprint = {2407.11083},
  primaryclass = {cs},
  publisher = {arXiv},
  urldate = {2024-10-30},
  archiveprefix = {arXiv}
}

@misc{yaoImprovingOutofDistributionRobustness2022,
  title = {Improving {{Out-of-Distribution Robustness}} via {{Selective Augmentation}}},
  author = {Yao, Huaxiu and Wang, Yu and Li, Sai and Zhang, Linjun and Liang, Weixin and Zou, James and Finn, Chelsea},
  year = {2022},
  number = {arXiv:2201.00299},
  eprint = {2201.00299},
  primaryclass = {cs},
  publisher = {arXiv},
  urldate = {2025-07-25},
  archiveprefix = {arXiv}
}

@inproceedings{yaoLearningGraphInvariance2024,
  title = {Learning {{Graph Invariance}} by {{Harnessing Spuriosity}}},
  booktitle = {The {{Thirteenth International Conference}} on {{Learning Representations}}},
  author = {Yao, Tianjun and Chen, Yongqiang and Hu, Kai and Liu, Tongliang and Zhang, Kun and Shen, Zhiqiang},
  year = {2024},
  urldate = {2025-04-27}
}

@misc{yuFindingDiversePredictable2022,
  title = {Finding {{Diverse}} and {{Predictable Subgraphs}} for {{Graph Domain Generalization}}},
  author = {Yu, Junchi and Liang, Jian and He, Ran},
  year = {2022},
  number = {arXiv:2206.09345},
  eprint = {2206.09345},
  primaryclass = {cs},
  publisher = {arXiv},
  urldate = {2025-07-25},
  archiveprefix = {arXiv}
}

@misc{yuMindLabelShift2023,
  title = {Mind the {{Label Shift}} of {{Augmentation-based Graph OOD Generalization}}},
  author = {Yu, Junchi and Liang, Jian and He, Ran},
  year = {2023},
  number = {arXiv:2303.14859},
  eprint = {2303.14859},
  primaryclass = {cs},
  publisher = {arXiv},
  urldate = {2025-07-18},
  archiveprefix = {arXiv}
}

@inproceedings{zhangGradientNormAware2023,
  title = {Gradient {{Norm Aware Minimization Seeks First-Order Flatness}} and {{Improves Generalization}}},
  booktitle = {Proceedings of the {{IEEE}}/{{CVF Conference}} on {{Computer Vision}} and {{Pattern Recognition}}},
  author = {Zhang, Xingxuan and Xu, Renzhe and Yu, Han and Zou, Hao and Cui, Peng},
  year = {2023},
  pages = {20247--20257},
  urldate = {2025-10-08}
}

@misc{zhangMixupEmpiricalRisk2018,
  title = {Mixup: {{Beyond Empirical Risk Minimization}}},
  shorttitle = {Mixup},
  author = {Zhang, Hongyi and Cisse, Moustapha and Dauphin, Yann N. and {Lopez-Paz}, David},
  year = {2018},
  number = {arXiv:1710.09412},
  eprint = {1710.09412},
  primaryclass = {cs},
  publisher = {arXiv},
  urldate = {2025-07-25},
  archiveprefix = {arXiv}
}

@article{zhaoTRACIDatacentricApproach2025,
  title = {{{TRACI}}: {{A Data-centric Approach}} for {{Multi-Domain Generalization}} on {{Graphs}}},
  shorttitle = {{{TRACI}}},
  author = {Zhao, Yusheng and Wang, Changhu and Luo, Xiao and Luo, Junyu and Ju, Wei and Xiao, Zhiping and Zhang, Ming},
  year = {2025},
  journal = {Proceedings of the AAAI Conference on Artificial Intelligence},
  volume = {39},
  number = {12},
  pages = {13401--13409},
  urldate = {2025-07-18},
  copyright = {Copyright (c) 2025 Association for the Advancement of Artificial Intelligence}
}

@misc{zhuangLearningInvariantMolecular2023,
  title = {Learning {{Invariant Molecular Representation}} in {{Latent Discrete Space}}},
  author = {Zhuang, Xiang and Zhang, Qiang and Ding, Keyan and Bian, Yatao and Wang, Xiao and Lv, Jingsong and Chen, Hongyang and Chen, Huajun},
  year = {2023},
  number = {arXiv:2310.14170},
  eprint = {2310.14170},
  primaryclass = {cs},
  publisher = {arXiv},
  urldate = {2023-12-05},
  archiveprefix = {arXiv}
}

@inproceedings{zouFlattenLongRangeLoss2024,
  title = {Flatten {{Long-Range Loss Landscapes}} for {{Cross-Domain Few-Shot Learning}}},
  booktitle = {2024 {{IEEE}}/{{CVF Conference}} on {{Computer Vision}} and {{Pattern Recognition}} ({{CVPR}})},
  author = {Zou, Yixiong and Liu, Yicong and Hu, Yiman and Li, Yuhua and Li, Ruixuan},
  year = {2024},
  pages = {23575--23584},
  urldate = {2025-10-01}
}

\appendix

\section{Theoretical Proofs}
\subsection{Proof of Proposition \ref{prop:margin_radius_relation}}
\label{sec:proof_margin_radius_relation}
\begin{proof}

Let
\begin{equation}
  g(x)=S_{y}(x)\;-\;\max_{j\ne y}S_{j}(x)
\end{equation}
and the defined local robustness radius
\begin{equation}
  r(x)
  =\max\bigl\{r\ge0\mid \forall\|\delta\|\le r:\;g(x+\delta)\ge0\bigr\}.
\end{equation}
Assuming \(g\) is differentiable at \(x\), perform a first‐order Taylor expansion:
\begin{equation}
  g(x+\delta)\approx g(x)+\nabla g(x)^\top\delta.
\end{equation}
The boundary \(g(x+\delta)=0\) then satisfies
\begin{equation}
  \begin{aligned}
  g(x)+\nabla g(x)^\top\delta=0 
  \quad\Longrightarrow\quad 
  \nabla g(x)^\top\delta=-\,g(x).
  \end{aligned}
\end{equation}
We then seek the minimal \(\|\delta\|_2\) solving this equality.  
And by the Cauchy–Schwarz inequality, the minimizer is
\begin{equation}
  \delta^*
  =-\,\frac{g(x)}{\|\nabla g(x)\|_2^2}\;\nabla g(x),
\end{equation}
whose norm is
\begin{equation}
  \|\delta\|_2
  =\frac{|g(x)|}{\|\nabla g(x)\|_2}
  =\frac{g(x)}{\|\nabla g(x)\|_2}\quad(g(x)>0).
\end{equation}
Hence, under the linear approximation,
\begin{equation}
  r(x)\;\approx\;\|\delta\|_2
  =\frac{g(x)}{\|\nabla g(x)\|_2}.
\end{equation}
\end{proof}

\subsection{Proof of Theorem \ref{thm:energy_radius_relation}}
\label{sec:proof_energy_radius_relation}
\begin{assumption}[Fixed Logit Scale.]
  \label{assump:fixed_logit_scale}
Let $S(x)\in\mathbb{R}^C$ be the logit map and define the top logit
$L(x)=\max_i S_i(x)$ with unique argmax $i^*$ in a neighborhood
$\mathcal{U}$ of $x$ (i.e., no ties), and the top–2 margin
$\gamma(x)=L(x)-\max_{j\neq i^*}S_j(x)>0$. Assume:
(i) $S$ is (locally) differentiable on $\mathcal{U}$ so that $L$ and $\gamma$ vary continuously;
(iii) when analyzing $E(x)$ as a function of $\gamma$, we consider $L$ fixed (equivalently, compare logits after an additive recentering by $L$, which shifts $E$ by a constant and does not affect monotonicity in $\gamma$).
\end{assumption}

\begin{proof}
  Let \(f:\mathbb{R}^d\to\mathbb{R}^C\) be the classifier whose per class logits are \(S = \bigl[\,S_1(x),\dots,S_C(x)\bigr]^\top\) and energy score \(E(x)\) define in Equation \ref{eq:energy}, and let the top-1 logit be \(L=\max\limits_{i} S_i(x)\) with predicted class $i^*$, and the top-2 margin be:

  \begin{equation}
  \gamma = L - \max_{j\neq i^*} S_j(x).
  \end{equation}

  Then $E(x)$ is monotonically decreasing in both $L$ and $\gamma$. In particular:

  1. Binary Classification (\(C=2\)):
  
  Without loss of generality, shift logits so that $S_1(x) = \gamma/2$,$ S_2(x)=-\gamma/2$.
  Then we have:
  \begin{equation}
    \begin{aligned}
      E(x)= -\log\!\big(e^{\gamma/2}+e^{-\gamma/2}\big)
      = -\log\!\big(2\cosh (\gamma/2)\big).
    \end{aligned}
  \end{equation}

  Taking derivative w\.r.t. $|\gamma|$:
  \begin{equation}
    \begin{aligned}
      \frac{\partial E}{\partial |\gamma|} = -\frac{1}{2}\tanh(\tfrac{|\gamma|}{2})<0,
    \end{aligned}
  \end{equation}

  This equation implies that samples closer to the decision boundary ($|\gamma|\to 0$) always have strictly higher energy. By Equation \ref{eq:margin_radius_relation}, $r(x)$ is strictly increasing in $|g(x)|$, so $E(x)$ is strictly decreasing in $r(x)$.

  2. Multi-class Classification (C>2):

  Under the multi-class setting, we can rewrite the energy score as:
  \begin{equation}
    E(x)= -L - \log \!\Big(1+\sum_{j\neq i^*}\exp(S_j(x)-L)\Big).
  \end{equation}

  Since \(S_j - L \le -\gamma\) for all \(j\neq i^*\), we have
  \begin{equation}
    \begin{aligned}
      E(x) \ge -L - \log \!\Big(1+(C-1)e^{-\gamma}\Big).
    \end{aligned}
  \end{equation}
  
  The right-hand side is strictly decreasing in $\gamma$, so as the margin widens, energy becomes more negative, confirming that samples farther from the boundary exhibit lower energy.
  
  Then we can conclude that \(E(x)\) satisfies the following log-sum-exp bounds:
  \begin{equation}
    -L - \log\!\big(1+(C-1)e^{-\gamma}\big)
    \;\le\;
    E(x)
    \;\le\;
    -L,
  \end{equation}

  where equality of the lower bound is attained only when all non-max logits are equal.

  As $\gamma$ increases (moving further from the nearest decision boundary), the lower bound monotonically decreases, showing that $E(x)$ becomes smaller (more negative) for confident samples. With Equation \ref{eq:margin_radius_relation}, $r(x)$ is strictly increasing in $\gamma$, so for fixed $L$, the lower bound decreases as $r(x)$ increases, which implies $E(x)$ is locally monotonically decreasing in $r(x)$. 

\end{proof}

\subsection{Proof of Proposition \ref{prop:radius_relation}}
\label{sec:proof_radius_relation}

\begin{proof}[Proof Sketch]
We provide a first-order analysis that connects the perturbation radius $\rho$ in parameter space with the robustness radius $r$ in input space.

\textbf{Step 1. (First-order approximation.)}
For small perturbations $\delta_w$ and $\delta_x$, we apply the Taylor expansion around $(x,w)$:
\begin{equation}
\begin{aligned}
  S(x;w+\delta_w)\approx S(x;w)+\nabla_w S(x;w)\,\delta_w,
  \\
  S(x+\delta_x;w)\approx S(x;w)+\nabla_x S(x;w)\,\delta_x.
\end{aligned}
\end{equation}

\textbf{Step 2. (Loss Lipschitz bounds.)}
Using Assumption~\ref{assump:lipschitz}, the variations in logits and loss are bounded as
\begin{equation}
\begin{aligned}
  \|S(x;w+\delta_w)-S(x;w)\|_2\le L_w(x;w)\|\delta_w\|,
  \\
  \|S(x+\delta_x;w)-S(x;w)\|_2\le L_x(x;w)\|\delta_x\|,
\end{aligned}
\end{equation}
and therefore,
\begin{equation}
\begin{aligned}
  \mathcal{L}(w+\delta_w;x,y)-\mathcal{L}(w;x,y)
  \le L_\ell L_w(x;w)\|\delta_w\|,
  \\
  \mathcal{L}(w;x+\delta_x,y)-\mathcal{L}(w;x,y)
  \le L_\ell L_x(x;w)\|\delta_x\|.
\end{aligned}
\end{equation}

\textbf{Step 3. (Equating local worst-case loss increases.)}
For perturbations that induce an equivalent local loss increase in both spaces, we have
\begin{equation}
L_\ell L_w(x;w)\rho \approx L_\ell L_x(x;w)r,
\end{equation}
which simplifies to the mapping relation in Eq.~\eqref{eq:radius_mapping}:
\[
r \approx \frac{L_w(x;w)}{L_x(x;w)}\,\rho.
\]

\textbf{Step 4. (Geometric implication.)}
If the model exhibits a positive classification margin $\gamma(x;w)$, 
then for any perturbation $\|\delta_x\|\le r$ satisfying $L_x(x;w)r < \tfrac{1}{2}\gamma(x;w)$, 
the predicted label remains unchanged.
Thus, increasing $\rho$ under the SAM formulation enlarges the lower bound on $r(x)$, 
linking parameter-space flatness to input-space robustness.
\end{proof}


\section{Datasets}
\label{apdx:datasets}
We adopt two widely used benchmarks for graph OOD generalization—GraphOOD~\cite{guiGOODGraphOutofDistribution2022} and DrugOOD~\cite{jiDrugOODOutofDistributionOOD2022}—which together cover synthetic graphs, superpixel graphs, molecular graphs, and textual graphs:

{\textbullet} \textbf{GraphOOD:} a systematic benchmark tailored to graph OOD problems. We draw on four dataset groups of covarite shift in GraphOOD for graph classification: (1) \textbf{GOOD-Motif}: a synthetic dataset with two domain types—base‐graph structure and graph size. (2) \textbf{GOOD-CMNIST}: a multi‐class, semi‐synthetic dataset obtained by converting Colored MNIST~\cite{arjovskyInvariantRiskMinimization2020} into superpixel graphs, with different digit‐color as domains. (3) \textbf{GOOD-HIV}: a real‐world binary classification task predicting whether a molecule inhibits HIV replication, with scaffold and size as domains. (4) \textbf{GOOD-SST2} and \textbf{GOOD-Twitter}: sentiment‐analysis tasks (binary and ternary, respectively) derived by encoding sentences as syntax trees, using sequence length as the domain.

{\textbullet} \textbf{DrugOOD}, an molecule OOD benchmark for drug discovery, defines three domain splits—assay, scaffold, and size—applied to two binding‐affinity measurements (IC50 and EC50). This yields six binary‐classification datasets, each predicting drug–target binding affinity.

As in prior work, we partition each dataset by its domain attribute to induce distribution shifts. For example, in the Motif basis‐shift setting, the motif types in the test set are entirely disjoint from those in the training and validation sets, thus rigorously assessing model generalization. 

We use the ROC-AUC metric for the binary classification dataset and Accuracy for the others. 
More details on the datasets can be found in the original papers~\cite{guiGOODGraphOutofDistribution2022, jiDrugOODOutofDistributionOOD2022}. 

\section{Conditional Variational Autoencoder}
\label{apdx:cVAE}
\textbf{Conditional Variational Autoencoder (cVAE):}
Conditional Variational Autoencoder (cVAE) is a generative model that learns to encode input data into a latent space and then decode it back to the original space, conditioned on some auxiliary information such as class labels. The cVAE consists of an encoder \(q_\beta(z \mid x, y)\) that maps input data \(x\) and condition \(y\) to a latent variable \(z\), and a decoder \(p_\alpha(x \mid z, y)\) that reconstructs the input data from the latent variable and condition. The cVAE is trained by maximizing the evidence lower bound (ELBO):
\begin{equation}
\label{eq:cVAE_loss}
  \begin{aligned}
    & \mathcal{L}_{cVAE}(\alpha, \beta; x, y)  =\\
     \mathbb{E}_{q_\beta(z  \mid x, y)} & \left[ \log p_\alpha (x \mid z, y)\right] - D_{KL}(q_\beta(z \mid x, y) \| p(z \mid y))
  \end{aligned}
\end{equation}
where \(D_{KL}\) is the Kullback-Leibler divergence between the posterior and prior distributions of the latent variable. The cVAE can be used to generate new samples by sampling from the latent space and decoding them with the decoder. 
To avoid confusion, we will use \(Enc(\cdot)\) and \(Dec(\cdot)\) to denote the encoder and decoder respectively in remaining parts of this paper.

\section{Baselines Details}
\label{apdx:baselines_details}

We adopt the following methods as baselines for comparison:

\textbf{General methods:}

{\textbullet} \textbf{ERM} minimizes the empirical loss on the training set. 

{\textbullet} \textbf{IRM}~\cite{arjovskyInvariantRiskMinimization2020} seeks to find data representations across all environments by penalizing feature distributions that have different optimal classifiers.

{\textbullet} \textbf{Coral}~\cite{sunDeepCORALCorrelation2016} encourages feature distributions consistent by penalizing differences in the means and covariances of feature distributions for each domain.

{\textbullet} \textbf{VREx}~\cite{kruegerOutofDistributionGeneralizationRisk2021} reduces the risk variances of training environments to achieve both covariate robustness and invariant prediction.

\textbf{Graph-specific OOD methods:}

{\textbullet} \textbf{DIR}~\cite{wuDiscoveringInvariantRationales2021} discovers the subset of a graph as invariant rationale by conducting interventional data augmentation to create multiple distributions.

{\textbullet} \textbf{GIL}~\cite{liLearningInvariantGraph2022} employs unsupervised clustering to infer environmental labels and leverages the invariant principle to identify causal subgraphs.

{\textbullet} \textbf{GSAT}~\cite{miaoInterpretableGeneralizableGraph2022} proposes to build an interpretable graph learning method through the attention mechanism and inject stochasticity into the attention to select label-relevant subgraphs. 

{\textbullet} \textbf{CIGA}~\cite{chenLearningCausallyInvariant2022} proposes an information-theoretic objective to extract the desired invariant subgraphs from the lens of causality.

{\textbullet} \textbf{LECI}~\cite{guiJointLearningLabel2023} assume the availability of environment labels, and study environment exploitation strategies for graph OOD generalization.

{\textbullet} \textbf{iMoLD}~\cite{zhuangLearningInvariantMolecular2023} employ environment augmentation techniques to facilitate the learning of invariant graph-level representations.

{\textbullet} \textbf{EQuAD}~\cite{yaoEmpoweringGraphInvariance2024} adopts self-supervised learning to learn spuriosu efatures first, followed by learning invariant features by unlearning spurious features.

{\textbullet} \textbf{LIRS}~\cite{yaoLearningGraphInvariance2024} takes an indirect approach by first learning the spurious features and then removing them from the ERM-learned features.

Our selected baselines encompass a diverse array of approaches for tackling graph out-of-distribution (OOD) problems, including state-of-the-art and recently proposed methods. Some approaches such as OOD-GCL~\cite{liDisentangledGraphSelfsupervised2024}, GOODGAT~\cite{wangGOODATTestTimeGraph2024}, G-Splice~\cite{liGraphStructureExtrapolation2024} DGAT~\cite{guoInvestigatingOutofDistributionGeneralization2024}, et al. are omitted owing to a lack of comparable performance results or available implementation details. Moreover, the baselines we selected already encompass the main research directions of most state-of-the-art graph OOD methods.

\section{Detailed Related Work}
\label{apdx:related_work}
\textbf{Sharpness-Aware Minimization:} Sharpness-Aware Minimization (SAM)~\cite{foretSharpnessAwareMinimizationEfficiently2021} is a optimization technique that seeks to flat minima in loss landscape. Most of these works analyze the loss landscapes from the parameter space~\cite{shiOvercomingCatastrophicForgetting2021, zhangGradientNormAware2023}.
Recent studies suggest that applying perturbations in the input or feature space can yield superior generalization performance~\cite{liuDevilLowLevelFeatures2025, yiRandomRegistersCrossDomain2025, zouFlattenLongRangeLoss2024}.

\textbf{Out-of-distribution Generalization in Graph:}
OOD generalization is a critical challenge in graph learning, where models trained on a specific data distribution often fail to generalize well to unseen distributions.
IRM~\cite{arjovskyInvariantRiskMinimization2020}, which seeks to learn causally relevant subgraphs or representations that remain stable across different environments, is widely adopted in graph generalization. 
Since environment labels are not always available in datasets, OOD methods are broadly divided into label-dependent and label-free approaches. Label-dependent methods either exploit given labels (e.g., LECI~\cite{guiJointLearningLabel2023}) or predicted ones (e.g., GIL~\cite{liLearningInvariantGraph2022}, MoleOOD~\cite{yangLearningSubstructureInvariance2022}, GOODHSE~\cite{piaoImprovingOutofDistributionGeneralization2024}). In contrast, label-free methods pursue alternative strategies to uncover invariant subgraphs or representations. For instance, GALA~\cite{chenDoesInvariantGraph2023} employs an environment-assistant model to partition samples into positive and negative subsets, using contrastive learning to amplify spurious differences and identify invariant subgraphs; while CIGA~\cite{chenLearningCausallyInvariant2022}, grounded in causal assumptions and a mutual information maximization objective, directly constructs positive-negative contrasts within samples of the same class and constrains the information relationship between causal and spurious subgraphs to extract invariances. Other methods such as DIR~\cite{wuDiscoveringInvariantRationales2021} and GREA~\cite{liuGraphRationalizationEnvironmentbased2022} discover invariant subgraphs through subgraph decomposition and replacement, whereas iMoLD~\cite{zhuangLearningInvariantMolecular2023} identifies invariant representations by quantifying residual vectors and applying feature perturbation and alignment. In addition, several approaches attempt to exploit theoretical principles such as Infomax (e.g., EQuAD~\cite{yaoEmpoweringGraphInvariance2024}, LIRS~\cite{yaoLearningGraphInvariance2024}) or the graph information bottleneck (e.g., GSAT~\cite{miaoInterpretableGeneralizableGraph2022}, GOODAT~\cite{wangGOODATTestTimeGraph2024}) to uncover causal subgraphs. While prior approaches hinge on the quality of invariant subgraph or representation extraction, our method takes a fundamentally different perspective. By explicitly addressing the limitations of classification models on OOD samples, we leverage data augmentation to directly enhance robustness, thereby improving OOD generalization without relying on subgraph invariance.

\textbf{Data Augmentation methods for Graph OOD Generalization:}
Recent data augmentation techniques, within the graph domain and beyond, have empirically improved out-of-distribution generalization. Mixup~\cite{zhangMixupEmpiricalRisk2018} enhances robustness by linearly interpolating between pairs of labeled examples. LISA~\cite{yaoImprovingOutofDistributionRobustness2022} selectively interpolates within the same label or domain to boost robustness. In the graph domain, GraphMixup~\cite{wangMixupNodeGraph2021} extends Mixup by blending GNN hidden representations. ifMixUp~\cite{guoIfMixupInterpolatingGraph2022} applies Mixup directly at the input graph level. G-Mixup~\cite{hanGMixupGraphData2022} strengthens robustness by interpolating class-specific graph generators at the class level. 
While these methods provide feasible pathways for graph data augmentation, recently several data augmentation techniques specifically designed for graph generalization have also been proposed. 
OOD-GCL~\cite{liDisentangledGraphSelfsupervised2024} employs a disentangled graph encoder to separate causal and spurious features, then uses contrastive learning to augment causal features.
DPS~\cite{yuFindingDiversePredictable2022} constructs multiple label-invariant subgraphs using dedicated generators to train an invariant GNN predictor. LiSA~\cite{yuMindLabelShift2023} generates diverse augmented environments with a consistent predictive relationship and facilitates learning an invariant GNN. G-Splice~\cite{liGraphStructureExtrapolation2024} broadens the sample distribution through linear extrapolation of graph structure and features. 
Unlike these methods, which generate augmented samples through subgraph sampling or recomposition, our approach allows the autoencoder to draw multiple samples from the conditional distribution to produce diverse pseudo-samples. Meanwhile, perturbations in the latent space help avoid generating semantically invalid or excessively noisy samples.

\textbf{Adversarial Training and Perturbation Methods in Graph:}
Adversarial training is a prevalent technique for enhancing model robustness and generalization~\cite{goodfellowExplainingHarnessingAdversarial2015, madryDeepLearningModels2019a}. Graph perturbation perturbations can be grouped into structure-based and feature-based changes. Structure-based perturbations modify the graph topology to generate perturbation samples by adding or removing edges~\cite{goschAdversarialTrainingGraph2023, liSpectralAdversarialTraining2022}. Feature-based perturbations change node or edge attributes by injecting noise into features~\cite{fengGraphAdversarialTraining2019, wuAdversarialWeightPerturbation2023, xueCAPCoAdversarialPerturbation2021}. Furthermore, some methods introduce perturbations in the embedding space learned by GNNs~\cite{daiGraphTransferLearning2022, shenDomainadaptiveMessagePassing2023a, wuDomainAdversarialGraphNeural2019a, zhaoTRACIDatacentricApproach2025}. In recent work, some methods have employed generative models to generate perturbation samples at graph or feature levels~\cite{kongRobustOptimizationData2022, kwonExtraMixExtrapolatableData2022, wangGOLDGraphOutofDistribution2024}. You may refer to \cite{alamDomainAdaptationAdversarial2018, goschAdversarialTrainingGraph2023a, yangGraphAdversarialSelfSupervised2021} for more details.

\end{document}